\documentclass{article}
\usepackage{microtype}
\usepackage{graphicx}
\usepackage{subcaption}
\usepackage{booktabs} 

\usepackage{hyperref} 

\usepackage[accepted]{include/icml2026}

\usepackage{amsmath}
\usepackage{amssymb}
\usepackage{mathtools}
\usepackage{amsthm}
\usepackage{lipsum}
\usepackage{enumitem}
\usepackage[capitalize,noabbrev]{cleveref}
\usepackage{microtype}

\usepackage{tikz}
\usetikzlibrary{shapes.geometric, arrows, positioning, calc}

\usepackage{soul}
\usepackage{enumitem}

\theoremstyle{plain}
\newtheorem{theorem}{Theorem}[section]

\newtheorem{corollary}[theorem]{Corollary}
\theoremstyle{definition}

\theoremstyle{remark}
\newtheorem{remark}[theorem]{Remark}

\usepackage[textsize=tiny]{todonotes}
\usetikzlibrary{arrows.meta, shadows, positioning}
\icmltitlerunning{Scheduling Thoughts: Learning the Order of Thought in Diffusion Language Models}
\newcommand{\icmlEqualAdvising}{\textsuperscript{\ensuremath{\dagger}}Equal advising.}

\begin{document}

\twocolumn[
  \icmltitle{Scheduling Thoughts: Learning the Order of Thought in Diffusion Language Models}

  \icmlsetsymbol{equaladv}{\ensuremath{\dagger}}

  \begin{icmlauthorlist}
    \icmlauthor{Jiawei Xu}{umd}
    \icmlauthor{Minghui Liu}{umd}
    \icmlauthor{Aakriti Agrawal}{umd}
    \icmlauthor{Yifan Chen}{ucla,equaladv}
    \icmlauthor{Furong Huang}{umd,equaladv}
  \end{icmlauthorlist}

  \icmlaffiliation{umd}{University of Maryland, College Park}
  \icmlaffiliation{ucla}{University of California, Los Angeles}

\icmlcorrespondingauthor{Yifan Chen}{yifanchen@math.ucla.edu}
  \icmlcorrespondingauthor{Furong Huang}{furongh@umd.edu}

  \icmlkeywords{masked diffusion, reinforcement learning, reasoning, LLMs}

  \vskip 0.3in
]
\printAffiliationsAndNotice{\icmlEqualAdvising}
\begin{abstract}

Masked diffusion language models decode by iteratively unmasking tokens, where the unmasking order defines an ``order of thought'' 
that strongly influences generation quality yet is typically chosen heuristically. We derive a tractable upper bound on the sequential decoding mismatch, measured by the Kullback–Leibler divergence and expressed in terms of the model’s pathwise log-likelihood, with tightness under sufficient model expressivity. 
This bound induces a dense self-aware reward over ordered trajectories,
casting order selection as a principled policy optimization problem with a frozen denoiser. 
We instantiate this idea as \textbf{Self-Aware Scheduling (SAS)}, which learns a lightweight order policy using Group Relative Policy Optimization and applies seamlessly to both any-order and semi-autoregressive decoding. 
On Sudoku with 1B MDM, SAS improves puzzle accuracy from $82.0\%$ (best heuristic schedule) to $91.8\%$, and reaches $97.5\%$ with second-stage fine-tuning along learned trajectories. 
On mathematical reasoning with LLaDA-8B, SAS improves pass@1 on GSM8K from $64\%$ to $76\%$ and on MBPP from $39.5\%$ to $41\%$, consistently matching or exceeding heuristic schedules across generation lengths and block sizes. Project page: \url{https://jimmyxu123.github.io/SAS}.


\end{abstract}
\section{Introduction}

\textbf{Diffusion decoding has a hidden degree of freedom: \emph{the order of thought}.}
Masked diffusion language models \cite{austin2021program,lou2023discrete,hoogeboom2021argmax,shi2024simplified} generate discrete sequences by iteratively unmasking tokens, offering a flexible alternative to the fixed left-to-right factorization of autoregressive language models \cite{radford2018improving}. Crucially, diffusion decoding is not tied to a single generation order: inference depends on an \emph{unmasking schedule} that decides which positions are revealed at each step. This schedule is more than an implementation detail: it is an implicit ``order of thought'', and it determines which commitments the model makes early, which constraints it propagates, and which ambiguities it strategically postpones. Because different schedules trace different conditional trajectories, they induce different output distributions, and the schedule choice can substantially affect generation quality \cite{kimtrain,zheng2023reparameterized}.

\textbf{Heuristic schedules are fast, but they can think myopically.}
Most existing schedules are fixed greedy heuristics based on per-position uncertainty (e.g., confidence, margin, entropy) \cite{zheng2023reparameterized,kimtrain,ben2025accelerated}. While efficient, such rules optimize the \emph{next} step rather than the \emph{whole} decoding trajectory: they often pick ``easy'' tokens that become locally certain under the current partial context, even when a different reveal would better shape future conditionals. Empirically, we find that these myopic choices underperform on complex reasoning tasks (see Table~\ref{tab:sudoku} and Figure~\ref{fig:sudoku_combined} in Section~\ref{sec:exp_sudoku}). Even worse, expert-designed logical orders can be suboptimal for an \emph{imperfect} pretrained diffusion model: what is logically natural for humans is not necessarily the path along which the model's conditional predictions are most reliable (see Section~\ref{sec:exp_sudoku}). These observations motivate a principled approach that \emph{learns} the order, instead of hard-coding it.

\textbf{Key idea and theory: learn the schedule by optimizing the model's \emph{trajectory likelihood}.}
We treat diffusion decoding as a latent-order generative process \cite{huang2025reinforcing,wanglearning} and ask a direct question: \emph{which reveal order makes the model most faithful to the data distribution?}
Concretely, we formalize scheduling as minimizing the KL divergence between the data distribution and the distribution induced by a given decoding procedure. Our main theoretical contribution is to show that this intractable objective admits a tractable bound expressed in terms of the denoiser's \emph{pathwise} (trajectory) log-likelihood under teacher forcing. This yields a dense, model-self-aware reward that assigns credit to \emph{each} reveal decision, rather than only the final sample. Under this reward, a good schedule reveals the token maximizing the expected \emph{future} log-likelihood of the remaining trajectory, i.e., it chooses the next ``thought'' to make downstream reasoning easiest for the model, aligning the decoding path with the model's predictive strengths rather than with local uncertainty alone.

\textbf{Method: \textbf{Self-Aware Scheduling (SAS)} for plug-and-play order learning.}
Guided by the theory, we propose \textbf{Self-Aware Scheduling (SAS)}, a lightweight framework that learns an order policy while keeping the diffusion model fixed.
At each decoding step, the policy outputs a distribution over currently masked positions and selects (or samples) one position to reveal next.
SAS is plug-and-play (no retraining of the base denoiser), and it can be optimized efficiently with Group Relative Policy Optimization (GRPO) \cite{shao2024deepseekmath}.
The same learned policy applies seamlessly to both any-order sequential decoding and semi-autoregressive (block) decoding \cite{arriola2025block}, turning ``how to unmask'' into a trainable component of inference.

\textbf{Results: optimizing the order of thought is a strong lever for reasoning.}
Across diverse reasoning benchmarks, SAS yields consistent and substantial improvements over heuristic and expert-designed schedules.
On large-scale Sudoku with a 1B masked diffusion model, SAS improves puzzle accuracy from $82.0\%$ (best heuristic) to $91.8\%$.
On mathematical reasoning (GSM8K) with LLaDA-8B \cite{nie2025large}, it improves pass@1 from $64\%$ to $76\%$.
We further show that a simple second-stage fine-tuning procedure along the learned trajectories using our self-aware objective provides additional gains.
Together, these results position decoding order as a controllable and optimizable reasoning primitive: beyond learning \emph{what} to generate, we can learn \emph{when} to commit to each piece of information---in effect, learning an order of thinking that the model itself finds most reliable.

\textbf{\textbf{Summary of Contributions.}}
\begin{itemize}
    \item \textbf{Theory (self-aware objective for order learning):} We cast diffusion decoding as a latent-order generative process and derive a tractable bound linking decoding mismatch (KL divergence) to the denoiser's cumulative pathwise log-likelihood, yielding a dense, model-self-aware reward for credit assignment over reveal decisions.
    \item \textbf{Algorithm (SAS):} We introduce \textbf{Self-Aware Scheduling}, a plug-and-play framework that learns an unmasking-order policy for a \emph{frozen} diffusion LM via GRPO, applicable to both any-order and semi-autoregressive decoding.
    \item \textbf{Empirical validation (reasoning gains + second stage):} We demonstrate large, consistent improvements over heuristic and expert schedules on Sudoku, math and coding reasoning, and show additional gains from second-stage fine-tuning along learned trajectories under the same self-aware objective. 
\end{itemize}

\section{Preliminaries and Problem Setup} \label{sec:bg}

\subsection{Masked Diffusion Models}
\label{sec:mdm}

Masked diffusion models generate discrete sequences by iteratively demasking tokens, analogous to the denoising process in continuous diffusion models \cite{song2020score, ho2020denoising}. 
Let $\mathcal{X}$ denote a finite vocabulary and $n$ the sequence length. A sequence is $x = (x_1,\ldots,x_n) \in \mathcal{X}^n$. For a mask pattern $M \subseteq [n] := \{1,\ldots,n\}$, define the masked sequence $x^M \in (\mathcal{X} \cup \{\texttt{[MASK]}\})^n$ by
\begin{equation}
    (x^M)_i := \begin{cases} \texttt{[MASK]} & \text{if } i \in M, \\ x_i & \text{if } i \notin M. \end{cases}
\end{equation}
A masked diffusion model (or \emph{denoiser}) parameterized by $\theta$ defines conditional distributions over masked tokens given observed context. Specifically, \textbf{the model $p_\theta$} provides position-wise predictions
$
p_\theta^i(\cdot \mid x^M), \ i \in M,
$
representing the distribution over token $x_i$ conditioned on the partial observation $x^M$. Here $\theta$ denotes the learned parameters of the base model.



\subsection{Unmasking Schedules: the Order of Thought}
Generation proceeds by iteratively unmasking positions, a process known as \emph{decoding}.
The unmasking schedule can be interpreted as an ``order of thought'' for generation. 
The flexibility of the masked diffusion model is that it allows any-order decoding in principle, and thus the unmasking order, or schedule, matters.
The \textbf{goal} of our paper is to introduce a methodology to learn the unmasking order. We define the schedule $v_\phi$, parameterized by $\phi$, as the ``\textbf{order policy}'',  which takes a partial sequence as input and outputs a distribution over which position(s) to unmask next.


 We denote by $\tilde{x}^{(t)}$ the partial sequence at decoding step $t$, where some positions contain tokens from $\mathcal{X}$ (revealed) and others contain \texttt{[MASK]} (not yet revealed). Starting from the fully masked sequence, at each step the algorithm selects which position(s) to unmask and samples their values from the model. We distinguish between two decoding regimes: \textit{sequential decoding}, which unmasks a single position per step, and \textit{parallel decoding}, which unmasks multiple positions simultaneously. In this work, we focus on the sequential setting to isolate the effects of ordering, while the parallel regime is detailed in Appendix~\ref{appendix:mdm}.

\textbf{Unmasking procedure.} Let $S_t \subseteq [n]$ denote the set of revealed positions at step $t$, and $M_t = [n] \backslash S_t$ denote the set of masked positions. Starting from $\tilde{x}^{(0)} = x^{[n]}$ (fully masked) with $S_0 = \emptyset$, at each step $t = 1,\ldots,n$:
\textbf{(1)} Select next position: $i_t \sim v_\phi(\cdot \mid \tilde{x}^{(t-1)})$ where $v_\phi(\cdot \mid \tilde{x}^{(t-1)})$ outputs a distribution over $[n] \setminus S_{t-1}$.
\textbf{(2)} Sample token: $\tilde{x}^{(t)}_{i_t} \sim p_\theta^{i_t}(\cdot \mid \tilde{x}^{(t-1)})$.
\textbf{(3)} Update: $S_t \leftarrow S_{t-1} \cup \{i_t\}$; set $\tilde{x}^{(t)}_j = \tilde{x}^{(t-1)}_j$ for all $j \neq i_t$.
The complete run produces an \emph{unmasking order} $\sigma = (i_1,\ldots,i_n)$ and final sequence $\tilde{x}^{(n)} \in \mathcal{X}^n$.



\textbf{Induced distributions.}
The decoding procedure induces distributions over output sequences. Characterizing these distributions is essential to quantify the discrepancy between the generative process and the true data distribution, thereby providing a principled objective for optimizing the order policy. To achieve so, we define notations for the decoding algorithm under \emph{teacher forcing} \cite{williams1989learning}, where the policy $v_\phi$ and model $p_\theta$ are evaluated on partially revealed versions of a fixed target sequence $x$, as will be explained below. This allows us to compute the probability the algorithm assigns to generating $x$ under a particular unmasking order.

For a target sequence $x$ and unmasking order $\sigma = (i_1,\ldots,i_n)$, define the \emph{teacher-forced trajectory} $\tilde{x}^{(0)}(\sigma),\ldots,\tilde{x}^{(n)}(\sigma)$ where $\tilde{x}^{(t)}(\sigma)$ reveals positions $\{i_1,\ldots,i_t\}$ from $x$:
\begin{equation}
\tilde{x}^{(t)}(\sigma)_j := \begin{cases} x_j & \text{if } j \in \{i_1,\ldots,i_t\}, \\ \texttt{[MASK]} & \text{otherwise}. \end{cases}
\end{equation}
In other words, $\tilde{x}^{(t)}(\sigma)$ is the partial sequence obtained by revealing the first $t$ positions in order $\sigma$ using their values from target $x$.
Under teacher-forcing with target $x$ and order $\sigma$, the \textit{order policy} (parameterized by $\phi$) assigns probability
\begin{equation}
v_\phi(\sigma \mid x) := \prod_{t=1}^n v_\phi(i_t \mid \tilde{x}^{(t-1)}(\sigma))
\end{equation}
to the unmasking order $\sigma$, where each factor evaluates the policy $v_\phi$ on the teacher-forced state $\tilde{x}^{(t-1)}(\sigma)$. The \textit{model} (parameterized by $\theta$) assigns path likelihood
\begin{equation}
p_\theta(x \mid \sigma) := \prod_{t=1}^n p_\theta^{i_t}(x_{i_t} \mid \tilde{x}^{(t-1)}(\sigma)),
\end{equation}
where each factor evaluates the model $p_\theta$ at position $i_t$ given teacher-forced state $\tilde{x}^{(t-1)}(\sigma)$. The sequential algorithm induces the joint distribution over sequences and orders
\begin{equation}
\label{eq:sequential_joint}
P_{\theta,\phi}^{\mathrm{seq}}(x, \sigma) := v_\phi(\sigma \mid x) \cdot p_\theta(x \mid \sigma),
\end{equation}
with marginal distribution over outputs
\begin{equation}
\label{eq:sequential_dist}
P_{\theta,\phi}^{\mathrm{seq}}(x) = \sum_{\sigma} v_\phi(\sigma \mid x) \cdot p_\theta(x \mid \sigma).
\end{equation}

\begin{remark}
Heuristic schedules can be formalized as a special case of our order policy framework where the policy $v_{\phi}$ is deterministic and non-trainable. Details are provided in Appendix~\ref{sec:heuristics}.
\end{remark}


\section{Unmasking Order Optimization: From Error Bounds to Reward Design}
\label{sec:method}


\begin{figure}[!htbp]
    \centering
    \begin{tikzpicture}[
        node distance=0.8cm,
        font=\small,
        base/.style = {
            rectangle, 
            rounded corners=3pt, 
            minimum width=7cm, 
            minimum height=1.5cm, 
            align=center, 
            draw=black!40, 
            thick,
            drop shadow={opacity=0.2, shadow xshift=2pt, shadow yshift=-2pt},
            inner sep=5pt
        },
        problem/.style = {base, fill=red!5, draw=red!40!black},
        solution/.style = {base, fill=blue!5, draw=blue!40!black},
        arrow/.style = {ultra thick, ->, >=Stealth, color=black!70},
    ]

        \node (problem) [problem] {
            \textbf{The Intractability Problem}\\ 
            Directly minimizing $\operatorname{KL}(\pi(x) \| P_{\theta,\phi}^{\mathrm{seq}}(x)) $\\ requires averaging over intractable $n!$ decoding orders.
        };
        
        \node (solution) [solution, below=of problem] {
            \textbf{The Self-Aware Framework}\\ 
            1. Derive tractable ELBO on likelihood (\Cref{thm:elbo}).\\
            2. Use ELBO as a pathwise reward $R_\theta$ for policy learning.\\
            3. Optimizing $R_\theta$ theoretically bounds\\ generation error (\Cref{thm:sas_joint_bound}).
        };

        \draw [arrow] ([xshift=-1.5cm]problem.south) -- node[right, font=\itshape, xshift=0.2cm, align=left] {Bridge with theory to derive reward} ([xshift=-1.5cm]solution.north);

    \end{tikzpicture}
    \caption{High-level overview of Section~\ref{sec:method}: We overcome the intractability of latent ordering by deriving a theoretical framework that allows us to optimize a tractable, self-aware reward.}
    \label{fig:logic_flow_2block}
\end{figure}

This section formalizes how the unmasking order affects the induced decoding distribution.
Our primary goal is to derive a principled objective for optimizing the unmasking policy in
\emph{sequential} decoding. We also characterize the additional discrepancy introduced by
\emph{parallel} decoding via an information-theoretic quantity (Section~\ref{sec:tc}).

Let $\pi(x)$ be the target distribution on $\mathcal{X}^N$ and let $P_{\theta,\phi}^{\mathrm{seq}}(x)$ be the output law induced by
\emph{sequential} decoding (Section~\ref{sec:mdm}).  We aim to fit the induced decoder distribution to
the data by maximizing its expected log-likelihood,
\begin{equation}
\label{eq:obj_mle}
\max_{\theta,\phi}\;\mathbb{E}_{x\sim\pi}\big[\log P_{\theta,\phi}^{\mathrm{seq}}(x)\big],
\end{equation}
which is equivalent to minimizing $\operatorname{KL}(\pi\|P_{\theta,\phi}^{\mathrm{seq}})$ up to the constant $H(\pi)$.

\subsection{Self-aware Reward as Pathwise Likelihood}
\label{sec:elbo}

For a data sample $x$ and an unmasking order $\sigma=(i_1,\ldots,i_n)$, let
$p_\theta(x\mid\sigma)$ denote the pathwise likelihood assigned by the diffusion model when the
tokens of $x$ are revealed according to $\sigma$ under teacher forcing. We define the
\emph{self-aware reward} as
\begin{equation}
\label{eq:self_aware_reward}
R_\theta(x,\sigma)\;:=\;\log p_\theta(x\mid\sigma).
\end{equation}
Thus, the reward is simply the frozen model's own pathwise log-likelihood evaluated along a
specific order. It scores an order by how well the diffusion model can explain the data when
forced to reveal tokens in that order.

We train the order policy by minimizing the negative expected self-aware reward:
\begin{equation}
\label{eq:self_aware_loss}
\resizebox{0.91\linewidth}{!}{
$\min_{\theta, \phi} \mathcal{L}_{\mathrm{SAS}}(\theta,\phi)
:=
\min_{\theta, \phi}  -\mathbb{E}_{x\sim\pi}\;\mathbb{E}_{\sigma\sim v_\phi(\cdot\mid x)}
\big[R_\theta(x,\sigma)\big].$
}
\end{equation}
In this work we primarily learn $\phi$ with $\theta$ frozen, and optionally apply a second-stage
fine-tuning of $\theta$ under a fixed learned order (Section~\ref{sec:second_stage}).

We next record two simple facts connecting this pathwise objective to the marginal likelihood
over orders. These results are not needed to motivate the reward itself; rather, they clarify
how the reward relates to the likelihood-based objective.

\begin{theorem}[Pathwise likelihood lower bound]
\label{thm:elbo}
For any $\pi$ and order policy $v_\phi$, the marginal log-likelihood satisfies
\begin{equation}
\label{eq:elbo}
\resizebox{0.91\linewidth}{!}{$\mathbb{E}_{x\sim\pi}\big[\log P_{\theta,\phi}^{\mathrm{seq}}(x)\big]
\;\ge\;
\mathbb{E}_{x\sim\pi}\;\mathbb{E}_{\sigma\sim v_\phi(\cdot\mid x)}
\big[\log p_\theta(x\mid\sigma)\big].$}
\end{equation}
\end{theorem}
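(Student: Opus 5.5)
The plan is to apply Jensen's inequality pointwise in $x$ and then average over $x\sim\pi$. The key observation is that the marginal \eqref{eq:sequential_dist} is already an expectation over orders. It is taken with respect to the teacher-forced order distribution $v_\phi(\cdot\mid x)$, which is exactly the distribution appearing on the right-hand side of \eqref{eq:elbo}. So no auxiliary proposal distribution or importance weights are needed.

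\textbf{Step 1 (normalization).} First I will verify that, for every fixed $x\in\mathcal{X}^n$, the map $\sigma\mapsto v_\phi(\sigma\mid x)$ is a probability distribution on the $n!$ orders. This is the step I expect to need the most care, because $v_\phi(\sigma\mid x)$ is defined through teacher-forced states rather than directly as a distribution. I will argue by summing out $i_n, i_{n-1},\ldots,i_1$ in turn. The state $\tilde{x}^{(t-1)}(\sigma)$ depends only on the prefix $(i_1,\ldots,i_{t-1})$ and on $x$. Moreover, $v_\phi(\cdot\mid\tilde{x}^{(t-1)}(\sigma))$ is by definition a distribution over the masked set $[n]\setminus\{i_1,\ldots,i_{t-1}\}$. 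Hence each innermost sum equals $1$, and by induction $\sum_\sigma v_\phi(\sigma\mid x)=1$.

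\textbf{Step 2 (rewrite the marginal).} By \eqref{eq:sequential_dist},
\[
P_{\theta,\phi}^{\mathrm{seq}}(x)=\mathbb{E}_{\sigma\sim v_\phi(\cdot\mid x)}\big[p_\theta(x\mid\sigma)\big].
\]

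\textbf{Step 3 (Jensen).} Since $\log$ is concave on $(0,\infty)$, Jensen's inequality gives
\[
\log P_{\theta,\phi}^{\mathrm{seq}}(x)\ge \mathbb{E}_{\sigma\sim v_\phi(\cdot\mid x)}\big[\log p_\theta(x\mid\sigma)\big].
\]
I will handle zero probabilities with the convention $\log 0=-\infty$. If some $\sigma$ in the support of $v_\phi(\cdot\mid x)$ has $p_\theta(x\mid\sigma)=0$, the right-hand side is $-\infty$ and the inequality is trivial. Otherwise all terms are positive and Jensen applies directly.

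\textbf{Step 4 (average over data).} Taking $\mathbb{E}_{x\sim\pi}$ of both sides preserves the inequality, which yields \eqref{eq:elbo}. Since $\mathcal{X}^n$ is finite, no integrability issues arise beyond possibly infinite negative values, which only strengthen the bound.

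For the later tightness discussion, I would also note the equality case. Equality holds for $x$ in the support of $\pi$ exactly when $p_\theta(x\mid\sigma)$ is constant over the support of $v_\phi(\cdot\mid x)$, for instance when the policy is deterministic.
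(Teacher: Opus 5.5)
Your proposal is correct and takes essentially the same route as the paper: write $P_{\theta,\phi}^{\mathrm{seq}}(x)$ as $\mathbb{E}_{\sigma\sim v_\phi(\cdot\mid x)}[p_\theta(x\mid\sigma)]$, apply Jensen's inequality to the concave $\log$ for each fixed $x$, and then average over $x\sim\pi$. The normalization check of $v_\phi(\cdot\mid x)$ and the handling of zero path likelihoods are extra rigor that the paper leaves implicit.
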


\begin{proof}
Since
$P_{\theta,\phi}^{\mathrm{seq}}(x)=\mathbb{E}_{\sigma\sim v_\phi(\cdot\mid x)}[p_\theta(x\mid\sigma)]$,
Jensen's inequality gives
$\log P_{\theta,\phi}^{\mathrm{seq}}(x)\ge
\mathbb{E}_{\sigma\sim v_\phi(\cdot\mid x)}[\log p_\theta(x\mid\sigma)]$.
Taking expectation over $x\sim\pi$ proves the claim.
\end{proof}

\subsection{KL Interpretation for Sequential Decoding}
\label{sec:seq_guarantee}

We also relate the self-aware loss to the marginal mismatch
$\operatorname{KL}(\pi\|P_{\theta,\phi}^{\mathrm{seq}})$. This gives a compact interpretation of
the pathwise objective as a joint distribution matching objective over data and orders.

\begin{theorem}[Joint KL identity and marginal bound]
\label{thm:sas_joint_bound}
Let
$Q_{\phi}(x, \sigma) \coloneqq \pi(x)v_{\phi}(\sigma|x)$
and
$P_{\theta, \phi}(x, \sigma)\coloneqq p_\theta(x|\sigma) v_\phi(\sigma|x)$
be the data-policy and model-policy joint distributions, respectively.
The self-aware loss $\mathcal{L}_{\mathrm{SAS}}$ satisfies
\begin{equation}
\label{eq:joint_identity}
\resizebox{0.88\linewidth}{!}{%
$
\begin{aligned}
    \operatorname{KL}(\pi(x) \| P_{\theta,\phi}^{\mathrm{seq}}(x)) \;&\le\;
    \mathcal{L}_{\mathrm{SAS}}(\theta,\phi) - H(\pi) \\
&\;=\; 
\operatorname{KL}(Q_{\phi}(x,\sigma) \| P_{\theta, \phi}(x,\sigma)).
\end{aligned}
$
}
\end{equation}
where $H(\pi)$ is the constant data entropy.
\end{theorem}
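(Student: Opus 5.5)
The plan is to prove the two parts of \eqref{eq:joint_identity} separately: first the equality on the right, then the inequality on the left. Here $H(\pi) = -\mathbb{E}_{x\sim\pi}[\log \pi(x)]$ is the Shannon entropy.

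\textbf{Step 1: the equality.} Expand the joint KL directly from the definitions of $Q_\phi$ and $P_{\theta,\phi}$:
\begin{equation*}
\operatorname{KL}(Q_\phi\|P_{\theta,\phi}) = \sum_{x,\sigma} \pi(x)v_\phi(\sigma|x)\log\frac{\pi(x)v_\phi(\sigma|x)}{p_\theta(x|\sigma)v_\phi(\sigma|x)}.
\end{equation*}
The policy factor $v_\phi(\sigma|x)$ cancels inside the logarithm wherever $Q_\phi>0$; the convention $0\log 0=0$ handles orders with zero policy mass. This leaves
\begin{equation*}
\sum_{x,\sigma} \pi(x)v_\phi(\sigma|x)\log\pi(x) - \mathbb{E}_{x\sim\pi}\mathbb{E}_{\sigma\sim v_\phi(\cdot|x)}[\log p_\theta(x|\sigma)].
\end{equation*}
Since $v_\phi(\cdot|x)$ is a probability distribution over orders, summing over $\sigma$ turns the first term into $-H(\pi)$. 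By \eqref{eq:self_aware_reward} and \eqref{eq:self_aware_loss}, the second term is exactly $\mathcal{L}_{\mathrm{SAS}}(\theta,\phi)$. This gives $\operatorname{KL}(Q_\phi\|P_{\theta,\phi})=\mathcal{L}_{\mathrm{SAS}}-H(\pi)$.

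\textbf{Step 2: the inequality.} Write
\begin{equation*}
\operatorname{KL}(\pi\|P^{\mathrm{seq}}_{\theta,\phi}) = -H(\pi) - \mathbb{E}_{x\sim\pi}[\log P^{\mathrm{seq}}_{\theta,\phi}(x)].
\end{equation*}
\Cref{thm:elbo} bounds $\mathbb{E}_\pi[\log P^{\mathrm{seq}}_{\theta,\phi}(x)]$ from below by $-\mathcal{L}_{\mathrm{SAS}}$. Substituting gives $\operatorname{KL}(\pi\|P^{\mathrm{seq}}_{\theta,\phi})\le \mathcal{L}_{\mathrm{SAS}}-H(\pi)$. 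Equivalently, one can check that the marginals of $Q_\phi$ and $P_{\theta,\phi}$ over $x$ are $\pi$ and $P^{\mathrm{seq}}_{\theta,\phi}$ (by \eqref{eq:sequential_dist}), and then invoke the data-processing (marginalization) inequality for KL. I would mention this as a remark, since it explains the bound as joint-versus-marginal matching.

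\textbf{Main obstacle: normalization.} The step needing care is verifying that $P_{\theta,\phi}(x,\sigma)$ is a genuine probability distribution, so that the joint KL is nonnegative and the data-processing interpretation is valid. The argument I would use is that $v_\phi(\sigma|x)\,p_\theta(x|\sigma)$ factorizes as a product over steps $t$ of $v_\phi(i_t|\tilde x^{(t-1)})\,p^{i_t}_\theta(x_{i_t}|\tilde x^{(t-1)})$. Each factor depends only on the teacher-forced state $\tilde x^{(t-1)}$, which is determined by the positions and tokens revealed before step $t$. Each factor is therefore a normalized conditional of the sequential sampling process, so summing over $(i_n,x_{i_n})$, then $(i_{n-1},x_{i_{n-1}})$, and so on backward yields total mass $1$. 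Support issues (for instance $p_\theta=0$ where $\pi>0$) only make both sides $+\infty$ and are handled by the usual conventions. The Jensen route via \Cref{thm:elbo} does not need normalization, so the inequality holds regardless.
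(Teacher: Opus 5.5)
Your proof is correct and follows the paper's: Step 1 is the same direct expansion of $\operatorname{KL}(Q_\phi\|P_{\theta,\phi})$ with the policy factor cancelling. For Step 2, the paper uses the data-processing (marginalization) argument you list as your alternative, which is the same inequality as your route through Theorem~\ref{thm:elbo} (Jensen over $\sigma$ for each fixed $x$), and your normalization check for $P_{\theta,\phi}$ is a correct extra that the paper leaves implicit.
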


The proof is provided in Appendix~\ref{sec:proof_theorem}. Theorem~\ref{thm:sas_joint_bound}
shows that minimizing the self-aware loss is equivalent, up to the constant $H(\pi)$, to minimizing
a joint KL mismatch over $(x,\sigma)$. This joint mismatch also upper bounds the marginal generation
mismatch $\operatorname{KL}(\pi(x)\|P_{\theta,\phi}^{\mathrm{seq}}(x))$.

\begin{corollary}[Joint realizability and tightness]
\label{cor:expressivity_tight_bound}
If the model family is expressive enough to allow for joint realizability---i.e., there exist
parameters $(\theta^\star, \phi^\star)$ such that
$\operatorname{KL}(Q_{\phi^\star}(x,\sigma) \| P_{\theta^\star, \phi^\star}(x,\sigma))=0$---then
the marginal mismatch $\operatorname{KL}(\pi\|P_{\theta, \phi}^{\mathrm{seq}})$ is zero, and the
pathwise lower bound of Theorem~\ref{thm:elbo} is tight, satisfying
$\mathcal{L}_{\text{SAS}}(\theta^\star, \phi^\star) = H(\pi)$.
\end{corollary}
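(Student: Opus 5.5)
The argument is a direct consequence of Theorem~\ref{thm:sas_joint_bound} together with Theorem~\ref{thm:elbo}, so the plan is to plug in the realizing parameters and read off each claim.

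\textbf{Marginal mismatch.} Fix $(\theta^\star,\phi^\star)$ with $\operatorname{KL}(Q_{\phi^\star}\|P_{\theta^\star,\phi^\star})=0$. The chain \eqref{eq:joint_identity} gives
\[
0\le \operatorname{KL}(\pi\|P^{\mathrm{seq}}_{\theta^\star,\phi^\star}) \le \mathcal{L}_{\mathrm{SAS}}(\theta^\star,\phi^\star)-H(\pi) = \operatorname{KL}(Q_{\phi^\star}\|P_{\theta^\star,\phi^\star}) = 0 .
\]
Hence the marginal KL vanishes and $\mathcal{L}_{\mathrm{SAS}}(\theta^\star,\phi^\star)=H(\pi)$. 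Nonnegativity of KL is the only extra ingredient.

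\textbf{Tightness of Theorem~\ref{thm:elbo}.} The right-hand side of \eqref{eq:elbo} equals $-\mathcal{L}_{\mathrm{SAS}}(\theta^\star,\phi^\star)=-H(\pi)$. The left-hand side equals $-H(\pi)-\operatorname{KL}(\pi\|P^{\mathrm{seq}}_{\theta^\star,\phi^\star})=-H(\pi)$ by the step above. So the two sides coincide.

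A more pointwise view is also available. Vanishing joint KL means $p_{\theta^\star}(x\mid\sigma)v_{\phi^\star}(\sigma\mid x)=\pi(x)v_{\phi^\star}(\sigma\mid x)$ for every $(x,\sigma)$ with $\pi(x)>0$. Therefore $p_{\theta^\star}(x\mid\sigma)=\pi(x)$ on the support of $v_{\phi^\star}(\cdot\mid x)$. The integrand in Jensen's inequality is then a.s.\ constant in $\sigma$, and equality holds in the proof of Theorem~\ref{thm:elbo} for each $x$.

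\textbf{Main obstacle.} The only delicate point is measure-theoretic bookkeeping. The quantities $\mathcal{L}_{\mathrm{SAS}}$ and $H(\pi)$ must be finite, and expectations are taken only over $\operatorname{supp}\pi$. This is automatic here because $\mathcal{X}^n$ and the set of orders are finite, and zero joint KL forces $p_{\theta^\star}(x\mid\sigma)>0$ wherever $Q_{\phi^\star}>0$. I would record this remark briefly and otherwise treat the corollary as immediate.
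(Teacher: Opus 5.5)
Your proposal is correct and follows essentially the same route as the paper. The paper also reads off $\operatorname{KL}(\pi\|P^{\mathrm{seq}}_{\theta^\star,\phi^\star})=0$ and $\mathcal{L}_{\mathrm{SAS}}(\theta^\star,\phi^\star)=H(\pi)$ directly from Theorem~\ref{thm:sas_joint_bound}, and its tightness step is exactly your ``pointwise view'' ($p_{\theta^\star}(x\mid\sigma)=\pi(x)$ on the support of $v_{\phi^\star}(\cdot\mid x)$, so the Jensen integrand is constant in $\sigma$), while your aggregate argument via $\mathbb{E}_{x\sim\pi}[\log P^{\mathrm{seq}}_{\theta^\star,\phi^\star}(x)]=-H(\pi)-\operatorname{KL}(\pi\|P^{\mathrm{seq}}_{\theta^\star,\phi^\star})$ is an equally valid shortcut to the same equality.
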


The proof is provided in Appendix~\ref{sec:joint_tightness}. Corollary~\ref{cor:expressivity_tight_bound}
states that under joint realizability the bound is achievable, yielding
$\operatorname{KL}(\pi(x)\|P_{\theta, \phi}^{\mathrm{seq}}(x))=0$. More generally, if approximate
realizability holds with
$\operatorname{KL}(Q_{\phi^\star}(x,\sigma) \| P_{\theta^\star, \phi^\star}(x,\sigma)) \leq \epsilon$,
then the marginal mismatch is also bounded as
$\operatorname{KL}(\pi(x) \| P_{\theta,\phi}^{\mathrm{seq}}(x)) \leq \epsilon$.

\begin{remark}
\label{rem:expressivity_exactness}
In the regime of a perfect diffusion model $p_\theta$, the upper bound in
Theorem~\ref{thm:sas_joint_bound} is tight with
$\mathcal{L}_{\mathrm{SAS}}(\theta^\star,\phi)=H(\pi)$ for \emph{any} order policy $\phi$.
Thus, order learning is most useful when the denoiser is imperfect: improving $\phi$ finds easier
generation paths for the frozen denoiser, while improving $\theta$ improves the pathwise predictions.
Both reduce the joint mismatch, as we observe empirically in Section~\ref{sec:second_stage}. The proof
is in Appendix~\ref{sec:perfect_model}.
\end{remark}

\begin{remark}
    \label{sec:parallel_remark}
Although the main text focuses on sequential decoding, the same self-aware loss also provides control over parallel decoding mismatch. In Appendix~\ref{app:proof}, we show that the additional error caused by block updates is upper-bounded by a term governed by within-block conditional dependencies, which, together with the self-aware loss, upper bounds the total variation error of the generated distribution.
\end{remark}

\section{Methodology and Policy Optimization}\label{sec:opt}
We focus on \emph{sequential} (one-by-one) unmasking, in which the decoding procedure
reveals exactly one position per step. Concretely, we freeze the diffusion model parameters $\theta$
(and its denoising head) and optimize only the order-policy parameters $\phi$ using reinforcement
learning.

\subsection{Learning with Self-aware Reward}
\label{sec:selfaware}

\textbf{Monte Carlo approximation of the objective.}
For a target sequence $x \sim \pi$ and order $\sigma$, we define the return as the \textbf{pathwise log-likelihood} of the target tokens under the frozen denoiser:
\begin{equation}
\label{eq:monte_carlo_return}
    R_\theta(x, \sigma) = \sum_{t=1}^{N} \log p_\theta^{i_t}(x_{i_t} \mid \tilde{x}^{(t-1)}(\sigma)).
\end{equation}
This serves as a dense, self-aware signal for credit assignment.

\textbf{Order-policy objective (optimize $\phi$ only).}
We learn the order policy $v_\phi(\sigma\mid x)$ by maximizing this expected return while keeping the diffusion model $\theta$ fixed:
\begin{equation}
\label{eq:order_obj}
\phi^\star
\;\in\;
\arg\max_{\phi}\;
\mathbb{E}_{x\sim\pi}\;
\mathbb{E}_{\sigma\sim v_\phi(\cdot\mid x)}
\Big[
R_\theta(x,\sigma)
\Big].
\end{equation}

\subsection{Parameterization of the Order Policy}
\label{sec:policy_param}

\textbf{State-dependent categorical policy over remaining indices.}
At step $t$, the policy observes the current partial state
$s_{t-1} := \tilde{x}^{(t-1)}$ and chooses the next index $i_t$ from the masked set $M_{t-1}$.
We parameterize the order policy as a categorical distribution over indices:
\begin{equation}
\label{eq:policy_factor}
\begin{split}
    &v_\phi(\sigma\mid x)
\;=\;
\prod_{t=1}^N v_\phi\!\big(i_t \mid s_{t-1}\big),
\qquad\\
&v_\phi\!\big(i \mid s_t\big)=0\;\;\text{if}\;\; i\notin M_t.
\end{split}
\end{equation}

\textbf{Scoring form (masked softmax).}
Given a partial state $s_t$, we compute a scalar score $u_{\phi,i}(s_t)\in\mathbb{R}$ for each position
$i\in[N]$ and define
\begin{equation}
\label{eq:masked_softmax}
v_\phi(i\mid s_t)
\;=\;
\frac{\exp\big(u_{\phi,i}(s_t)/\tau\big)\,\mathbf{1}\{i\in M_t\}}
{\sum_{j\in M_t}\exp\big(u_{\phi,j}(s_t)/\tau\big)},
\end{equation}
where $\tau>0$ is a temperature. In our implementation, $u_{\phi,i}(s_t)$ is produced by a lightweight
policy head on top of frozen diffusion-model features. We refer the reader to Appendix~\ref{app:implementation} for details of order policy design.
\subsection{GRPO training}
\label{sec:grpo}

\textbf{MDP formulation.}
Order learning can be cast as a finite-horizon Markov Decision Process (MDP):
\begin{itemize}[leftmargin=*, noitemsep, topsep=0pt, parsep=0pt]
\item \textbf{State} $s_{t-1}$: the partially revealed sequence $\tilde{x}^{(t-1)}$ 
\item \textbf{Action} $a_t\in M_{t-1}$: choose the next index $i_t$ to unmask.
\item \textbf{Transition}: deterministic teacher-forced reveal,
$s_t = \mathcal{T}(s_{t-1},a_t)$ obtained by setting position $i_t$ to $x_{i_t}$.
\item \textbf{Reward}: $r_t=\log p_{\theta}^{i_t}(x_{i_t}\mid s_{t-1})$.
\end{itemize}
The trajectory corresponds to an order $\sigma$, and the return equals the self-aware reward
$R_\theta(x,\sigma)$.

\paragraph{Group Relative Policy Optimization (GRPO).}
We optimize $v_\phi$ using GRPO~\citep{shao2024deepseekmath}, a critic-free method that employs group-relative normalization. For each input $x$, we sample a group of $G$ outputs $\{\sigma^{(i)}\}_{i=1}^G$ from the old policy $v_{\phi_{\text{old}}}$. An advantage $\widehat{A}_i$ is computed by standardizing the cumulative reward $R_i$ against the group statistics: $\widehat{A}_i = (R_i - \mu_G)/(\sigma_G + \epsilon_{\text{adv}})$. The GRPO objective maximizes a clipped surrogate function:
\begin{equation}
\begin{aligned}
    J_{\text{GRPO}}(\phi) = \mathbb{E} \Bigg[ & \frac{1}{G} \sum_{i=1}^G \frac{1}{N} \sum_{t=1}^N \min \bigg( g_{i,t}(\phi)\hat{A}_{i,t}, \\
    & \text{clip}\big(g_{i,t}(\phi), 1-\epsilon, 1+\epsilon\big)\hat{A}_{i,t} \bigg)\Bigg]
\end{aligned}
\label{eq:grpo_objective}
\end{equation}
with the per-step probability ratio $g_{i,t} = \frac{v_\phi(\sigma^{(i)}_t \mid s^{(i)}_{t-1})}{v_{\phi_{\text{old}}}(\sigma^{(i)}_t \mid s^{(i)}_{t-1})}$.

Each training example pairs a prompt $c$ with a target completion $x$. The policy operates exclusively on the completion positions, treating the prompt as fixed context. Thus, the process begins at $s_0=\mathrm{concat}(c,\tilde{x}^{(0)})$, where the prompt is fully visible and the target is entirely masked.

\section{Related Work} \label{sec:rw}
\textbf{Heuristic schedules for diffusion LLMs.}
The unmasking schedule critically determines information accrual in masked diffusion models \cite{nie2025large, ye2025dream, shi2024simplified}. Prior work predominantly uses training-free heuristics that prioritize tokens by confidence or local certainty \cite{ben2025accelerated, wei2025accelerating, hong2025improving, kimtrain, li2025diffusion, yu2025dimple}, with variants incorporating spatial-temporal structure \cite{huang2026empiricalanalysisdecodingbiases, wang2025time} or token dependencies \cite{azangulov2025parallel}. We instead learn a sequential policy via RL to optimize for global trajectory coherence.

\textbf{Learning orders.} 
Finding effective decoding orders has been explored in non-autoregressive generation \cite{wu2018beyond, zhu2019text, gu2019insertion, welleck2019non, stern2019insertion}. For diffusion LMs, recent work optimizes schedules using task-specific verifiable rewards: \citet{huang2025reinforcing, zhao2025diffpo} jointly train the policy and MDM, while \citet{hong2025improving, jazbec2025learning} learn policies for frozen models using sparse terminal rewards. We propose minimizing KL divergence—a general objective that yields dense rewards and applies even when verifiable rewards are unavailable. 
Since discrete diffusion equivalently represents any-order autoregressive models \cite{hoogeboom2021autoregressive, ou2024your}, related approaches derive similar ELBO-based objectives for diffusion molecular generation \cite{wanglearning}, though requiring two additional networks and more complex formulation. We also note related work on optimizing orders for efficient parallel decoding to accelerate generation \cite{chen2024accelerating,shih2023parallel,park2024jump}, complementary to the statistical efficiency we pursue. Our theoretical bounds extend to parallel decoding (Appendix \ref{sec:tc}).

\textbf{Diffusion LLMs post-training.}
Recent RL-based post-training focuses on refining the diffusion backbone itself. For instance, D1 \cite{zhao2025d1} introduces a GRPO variant tailored for discrete diffusion, while other works concentrate on improving policy gradient estimation \cite{tang2025wd1, wang2025revolutionizing, lin2025boundary, zhu2025enhancing, wang2025spg} to enhance reasoning capabilities. These approaches typically use fixed confidence-based schedules during optimization. In contrast, we propose a two-stage framework prioritizing generation order: first optimizing the unmasking policy with a frozen diffusion head, then fine-tuning the head along learned trajectories. This approach is computationally efficient without expensive RL updates on the full model.
\section{Experiments} \label{label:exp}
In this section, we empirically validate that our self-aware reward formulation is both effective and scalable. Our experiments span diverse reasoning benchmarks---including logic puzzles, mathematical reasoning, and code generation---and cover model scales ranging from 1B to 8B parameters.

\textbf{Experimental Objectives.}
Our evaluation is guided by three primary research questions:(i) \textbf{Performance:} Can optimizing the unmasking order alone yield consistent improvements in reasoning tasks? (ii) \textbf{Emergent Strategy:} What decoding strategies emerge from the learned policy, and how do they correlate with the underlying task structure? (iii) \textbf{Generalization:} Does the learned order policy generalize across varying sequence lengths and decoding modalities (e.g., full diffusion vs. semi-autoregression)?

We investigate (i) across all benchmarks, restrict our structural analysis (ii) to the interpretable domain of Sudoku, and assess (iii) using mathematical and code generation tasks. 

\begin{figure}[t]
    \centering
    \begin{subfigure}[c]{0.48\linewidth}
        \centering
        \includegraphics[width=\linewidth]{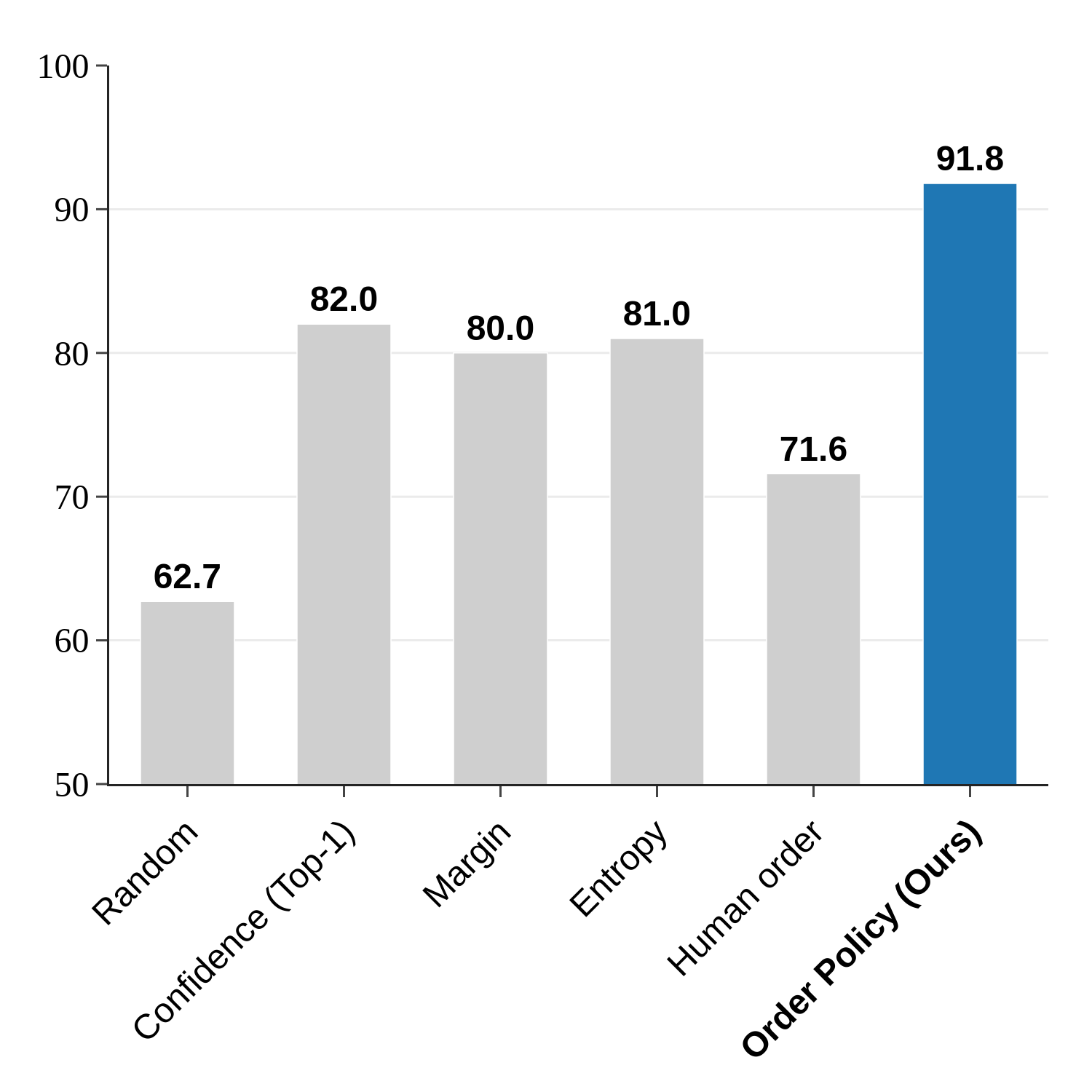} 
        \caption{Accuracy}
    \end{subfigure}
    \hfill 
    \begin{subfigure}[c]{0.48\linewidth}
        \centering
        \includegraphics[width=\linewidth, keepaspectratio]{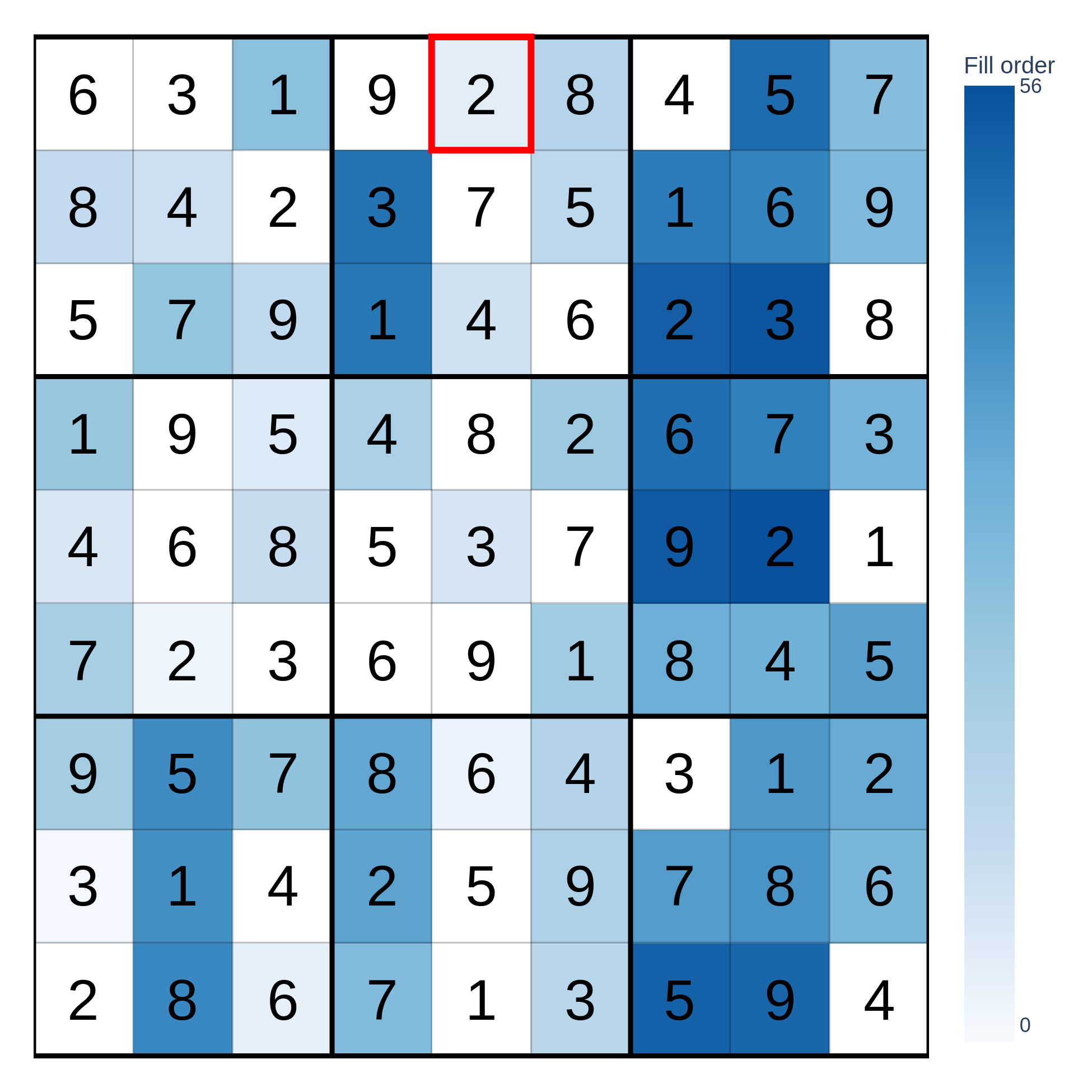}
        \caption{Order Visualization}
        \label{fig:sudoku_heatmap}
    \end{subfigure}
    
    \caption{
\textbf{Left:} Our method (Order Policy) achieves significantly higher success accuracy (91.8\%) on Sudoku puzzles compared to all baselines. 
\textbf{Right:} Visualization of the learned unmasking order. White cells represent fixed initial hints. Blue cells indicate generated tokens, where color intensity corresponds to the decoding step: lighter blue denotes early unmasking (high-confidence/foundational moves), while darker blue denotes later unmasking. 
For example, we red-boxed the first move located, which corresponds to the \emph{naked single} of the initial state—the easiest position to solve.
}
    \label{fig:sudoku_combined}
\end{figure}
\begin{figure}[t]
    \centering
    \includegraphics[width=\columnwidth]{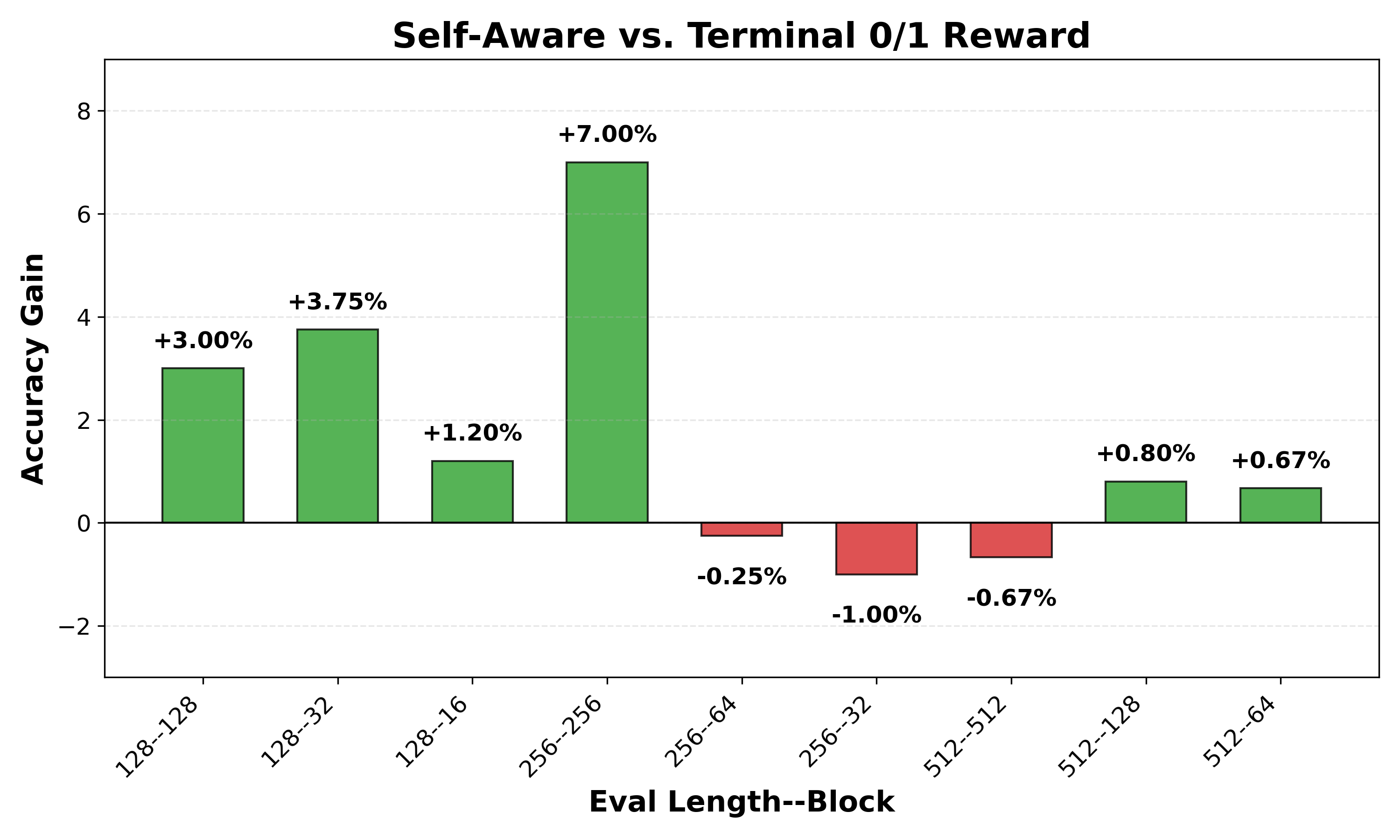}
    \caption{Self-aware reward vs.\ $0/1$ terminal reward for learning the order policy on \textsc{GSM8K} (5-shot). Notation \texttt{L--B}: total length $L$ and block size $B$. Bars represent the accuracy difference in percentage points.}
    \label{fig:reward_comparison_bar}
\end{figure}


\subsection{Sudoku: Controlled Study of Unmasking Order}
\label{sec:exp_sudoku}
\textbf{Setup.}
We construct a large-scale Sudoku corpus from \cite{rao2019sudoku}. From the provided training split, we subsample $1$M puzzles for training. Each Sudoku board is represented as a length-$81$ discrete sequence where each token corresponds to one cell value and $0$ denotes the blank cell. We treat $0$ as the \texttt{[MASK]} token and model the
completion of masked cells as conditional generation.

We instantiate with a 1B-scale masked diffusion model: SMDM-1B~\citep{nie2025scaling}. We first fine-tune the diffusion model on the $1$M training puzzles for one epoch; we then train a lightweight MLP policy on $1{,}000$ puzzles (sampled from the
training dataset) and report results on the remaining test split.

\textbf{Baselines.}
At inference time, we compare the learned policy against heuristic decoding orders:
(i) \textsc{Random} \cite{zheng2024masked}; (ii) \textsc{Confidence} \cite{zheng2023reparameterized};
(iii) \textsc{Margin} \cite{kimtrain};
(iv) \textsc{Entropy} \cite{zheng2023reparameterized}; and (v) a human \textsc{Expert} order \cite{shah2024causal} based on Sudoku solving logic.

\textbf{Results.}
Table~\ref{tab:sudoku} reports puzzle and cell accuracy across all order baselines.
Our learned order policy achieves the best performance by a large margin, confirming that
\textbf{optimizing unmasking order alone can substantially improve structured reasoning}. Interestingly, the human \textsc{Expert} order underperforms:
it is approximately $5\%$ worse than \textsc{Confidence} and about $15\%$ worse than our learned
policy in puzzle accuracy. This motivates a deeper analysis of the order strategies in Sudoku. To understand the decision logic of our Self-Aware Schedule, we visualize the decoding order of a specific puzzle in \Cref{fig:sudoku_combined}b. The resulting heatmap demonstrates a clear ``easy-to-hard'' curriculum. The policy prioritizes deterministic unmasking cells given the current context (e.g., naked singles). By resolving these high-certainty tokens first (light blue), the model propagates constraints to the more ambiguous regions (dark blue). 
\begin{table}[t]
\caption{Sudoku performance under different unmasking-order strategies. We report \emph{puzzle accuracy} (fraction of boards solved) and \emph{cell accuracy} (fraction of correctly filled cells).}
\label{tab:sudoku}
\begin{center}
\begin{small}
\begin{sc}
\resizebox{\columnwidth}{!}{
\begin{tabular}{lcc}
\toprule
\textbf{Order Strategy} & \textbf{Puzzle Acc. (\%)} & \textbf{Cell Acc. (\%)} \\
\midrule
Random                & 62.7 & 89.80 \\
Confidence (Top-1)    & 82.0 & 98.56 \\
Margin                & 80.0 & 98.38 \\
Entropy               & 81.0 & 98.53 \\
Human order           & 76.6 & 96.18 \\
\midrule
\textbf{Order Policy (Ours)} & \textbf{91.8} & \textbf{99.21} \\
\bottomrule
\end{tabular}
} 
\end{sc}
\end{small}
\end{center}
\vskip -0.1in
\end{table}

\subsection{Order Analysis of Sudoku and Kendall's $\tau$}
\label{sec:sudoku_tau}
To analyze how different schedules align with human expert
ordering---while accounting for the fact that many Sudoku moves are interchangeable---we introduce
an equivalence-class variant of Kendall's $\tau$ \citep{kendall1948rank} which is agnostic to the relative ordering of moves that are strategy-equivalent. We provide details in Appendix~\ref{sec:sudoku_app}.
\begin{table}[t]
\caption{Comparison with human expert order on Sudoku via equivalence-class Kendall's $\tau$ (2000 puzzles).}
\label{tab:tau_sudoku}
\begin{center}
\begin{small}
\begin{sc}
\resizebox{\columnwidth}{!}{%
\begin{tabular}{lccc}
\toprule
\textbf{Order Strategy} & \textbf{Mean\ $\tau_{\mathrm{eq}}$}  & \textbf{Frac.\ $\tau_{\mathrm{eq}}>0$} & \textbf{Frac.\ $\tau_{\mathrm{eq}}>0.5$}\\
\midrule
Random & -0.002 & 0.49 & 0.00 \\
Confidence (Top-1) & 0.514 & 1.00 & 0.58 \\
Margin & 0.516 & 0.99 & 0.59 \\
Entropy & 0.513 & 1.00 & 0.58 \\
\midrule
\textbf{Order Policy (Ours)} & \textbf{0.735} & \textbf{0.59} & \textbf{0.47} \\
\bottomrule
\end{tabular}
}
\end{sc}
\end{small}
\end{center}
\vskip -0.1in
\end{table}

\textbf{Results.}
Table~\ref{tab:tau_sudoku} reports the mean $\tau_{\mathrm{eq}}$,
the fraction of puzzles with $\tau_{\mathrm{eq}}>0$, and the fraction with $\tau_{\mathrm{eq}}>0.5$.
While our learned order policy attains the highest agreement with expert ordering, the alignment remains imperfect ($\tau_{\mathrm{eq}} < 1$), suggesting that the model's optimal path differs from human logical progression. When viewed alongside the performance gains in Table~\ref{tab:sudoku}, this observation suggests a critical insight: for masked diffusion LMs, strict adherence to human solving orders is \textbf{not} a prerequisite for optimal performance, and the model may benefit from alternative reasoning pathways. In other words, expert logic provides a strong prior over precedence constraints, but the optimal
schedule for a pretrained diffusion decoder is shaped from modeling uncertainty and statistical correlations, which our self-aware objective explicitly exploits.

\begin{figure*}[t]
    \centering
    \includegraphics[width=\textwidth]{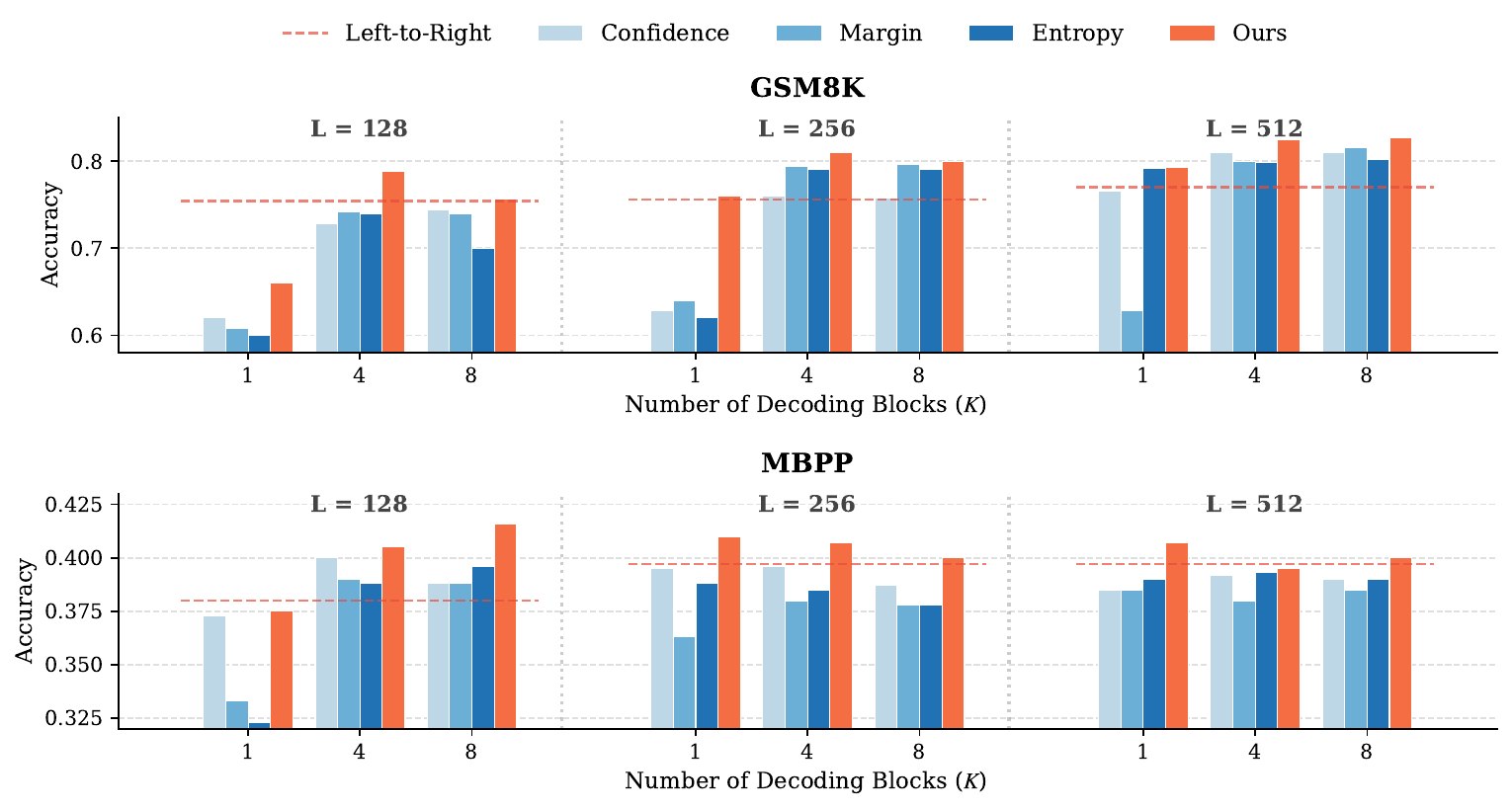}
    \caption{\textbf{Comparison across generation length and semi-autoregressive decoding.} 
We evaluate our proposed order policy against the other baselines across varying sequence lengths ($L$) and block numbers. The red dashed line represents standard Left-to-Right decoding heuristic of LLaDA-8B. Our method consistently outperforms the baseline. Notably, while Left-to-Right decoding is competitive at shorter lengths ($L=128$), our order policy still achieves superior performance even in full diffusion generation at longer lengths.}
    \label{fig:math_code_llada}
\end{figure*}
\subsection{Generalization to Math and Code Reasoning}
\label{sec:exp_math_code}
\textbf{Setup.}
We then move to study whether an order policy learned with the self-aware objective transfers to
larger-scale reasoning tasks beyond Sudoku. We use \textsc{LLaDA-8B-Instruct}~\citep{nie2025large} as the frozen base
masked diffusion model and parameterize the order policy as a lightweight one-layer Transformer.
We train the policy under the full diffusion (any-order) sequential decoding paradigm from the training splits of \textsc{GSM8K}~\citep{cobbe2021training} and \textsc{MBPP}~\citep{austin2021program}. The policy is trained with teacher-forced trajectories at a maximum generation length of 512. Training details and inference costs are provided in Appendix~\ref{sec:llada}.

\textbf{Evaluation protocol.}
We evaluate (i) generalization across generation lengths and (ii) transfer to semi-autoregressive
(block) inference. For length generalization, we report results for target lengths in
$\{128,256,512\}$. For semi-autoregressive decoding, we vary the number of decoding blocks $K \in \{1, 4, 8\}$ and apply the same learned policy to schedule these blocks at inference time. We denote each setting as
\texttt{promptlen/total}--\texttt{block}, e.g., \texttt{256--64} indicates total length 256 with
block size 64 (semi-autoregressive), while \texttt{256--256} corresponds to full diffusion decoding. We evaluate the results using \texttt{lm-eval-harness}.

\textbf{Results.}
In Figure~\ref{fig:math_code_llada}, it demonstrates the robust generalization of our approach on \textsc{GSM8K} and \textsc{MBPP}. While distinct heuristic baselines exhibit inconsistent performance across different settings, our learned order policy uniquely achieves superior or competitive performance across the entire spectrum of decoding regimes. Notably, we show while Left-to-Right scheduled decoding is competitive at shorter lengths ($L=128$), our order policy achieves superior performance even in full diffusion generation at longer lengths. This confirms that our scheduling strategy captures fundamental reasoning structures and is highly robust to shifts in generation horizon, effectively transferring well beyond its training configuration.

\begin{table}[t]
\caption{Second-stage fine-tuning of the diffusion model with a fixed unmasking order on Sudoku. Fine-tuning yields significant gains across schedules: Human Order: Accuracy improves from $76.6\%$ $\rightarrow$ $97.8\%$. Order Policy (Ours): Accuracy improves from $91.8\%$ $\rightarrow$ $97.5\%$. Notably, our self-discovered curriculum achieves parity with human expertise without needing ground-truth traces.}
\label{tab:stage2_sudoku}
\begin{center}
\begin{small}
\begin{sc}
\begin{tabular}{lcc}
\toprule
\textbf{Fixed order for fine-tuning} & \textbf{Before} & \textbf{After} \\
\midrule
Human order & 76.6 &  97.8\\
Order Policy (Ours) & 91.8 & 97.5 \\
\bottomrule
\end{tabular}
\end{sc}
\end{small}
\end{center}
\vskip -0.1in
\end{table}
\subsection{Second-Stage Fine-Tuning with the Self-Aware Objective}
\label{sec:second_stage}

Corollary~\ref{cor:expressivity_tight_bound} shows that, in the realizable setting, jointly optimizing the
unmasking order policy and the diffusion model can in principle drive the sequential decoding
error to zero. Motivated by this,
we propose a simple \emph{second-stage} optimization: after learning an order policy, we freeze
the policy and further fine-tune the diffusion model using the same self-aware objective.

\textbf{Fixed trajectory optimization.} The self-aware objective (Eq.~\eqref{eq:self_aware_loss}) can be used to optimize both $\theta$ and $\phi$.
After learning an order policy $\bar{v}$, we fix it and
replace the stochastic order distribution by a deterministic rollout determined by $\bar{v}$. This reduces the training signal from a \emph{corpus of possible orders} to a \emph{single
order trajectory}, yielding a simple supervised fine-tuning loss for the diffusion model.

\textbf{Second-stage objective.}
With the fixed order $\hat{\sigma}$, the optimization over $\theta$ becomes
\begin{equation}
\label{eq:second_stage_loss}
\max_{\theta}\;
\mathbb{E}_{x\sim\pi}\Big[
R_\theta\big(x,\hat{\sigma}\big)
\Big]
\;\equiv\;
\min_{\theta}\;
\mathcal{L}_{\mathrm{SFT}}(\theta),
\end{equation}
which is the negative log-likelihood of
ground-truth tokens \emph{along the fixed unmasking schedule}. 

We validate this second-stage optimization on Sudoku using two fixed orders: (i) the learned order policy $\bar{v}$ (greedy rollout), and (ii) a human expert order. Table~\ref{tab:stage2_sudoku} reports puzzle accuracy before and after second-stage fine-tuning. With the learned order, fine-tuning the diffusion head substantially improves puzzle accuracy, from $91.8\%$ to $97.5\%$. We also observe that fine-tuning under the human expert order leads to a large improvement from $76.6\%$ to $97.8\%$. Notably, our order learning matches the performance of supervised learning under expert ordering, demonstrating that the learned policy discovers a curriculum as effective as human expertise without requiring ground-truth ordering traces. Ultimately, these results underscore the \textbf{importance of scheduling thoughts effectively before refining the thoughts themselves}. While this work primarily focuses on learning the order schedule, it demonstrates that our self-aware objective serves as an effective and efficient post-training method on diffusion model weights, achieving significant gains without complicated RL on the model.

\subsection{Cross-domain Transfer of Learned Order Policies}
To evaluate whether the learned order policy captures reusable scheduling behavior rather than only
task-specific patterns, we test cross-domain transfer between math and code reasoning tasks. In
particular, we train the order policy on one domain and evaluate it on the other without further
task-specific adaptation. Table~\ref{tab:cross_domain_transfer} reports the transferred policy's
performance under different decoding settings, where $L$-$B$ denotes generation length and block size.
The value in parentheses is the gain over the strongest heuristic baseline under the same setting.

\begin{table}[t]
\centering
\small
\caption{
Cross-domain transfer of learned order policies. ``Math $\rightarrow$ Code'' denotes training the
order policy on math and evaluating it on code, while ``Code $\rightarrow$ Math'' denotes the reverse.
Numbers in parentheses indicate gains over the strongest heuristic baseline under the same decoding
setting.
}
\label{tab:cross_domain_transfer}
\begin{tabular}{@{}lccc@{}}
\toprule
Transfer setting & $256$-$256$ & $256$-$64$ & $256$-$32$ \\
\midrule
Math $\rightarrow$ Code
& 0.41 {\scriptsize(+1.5\%)}
& 0.40 {\scriptsize(+1.4\%)}
& 0.39 {\scriptsize(+0.33\%)} \\
Code $\rightarrow$ Math
& 0.713 {\scriptsize(+8.5\%)}
& 0.7967 {\scriptsize(+0.27\%)}
& 0.8033 {\scriptsize(+0.8\%)} \\
\bottomrule
\end{tabular}
\end{table}

The transferred policy outperforms the best heuristic baseline in all tested settings. This suggests
that SAS does not merely memorize task-specific orders, but learns transferable scheduling principles
about which positions are more informative to reveal early. The gains are especially pronounced for
Code $\rightarrow$ Math in the $256$-$256$ setting, while remaining positive across smaller block sizes.

\subsection{Comparison with Other Order Learning Methods}
\label{sec:compare_terminal_reward}
Our framework learns an unmasking-order policy using a \emph{dense} self-aware reward derived from
the diffusion model’s pathwise likelihood (Section~\ref{sec:opt}). An alternative is to apply
RLVR-style policy optimization \cite{guo2025deepseek} directly to the order policy using a \emph{sparse terminal} reward,
e.g., $r\in\{0,1\}$ indicating whether the final answer is correct \cite{jazbec2025learning, hong2025improving}.
We compare our dense self-aware reward against a $0/1$ terminal reward for learning the order policy
on \textsc{GSM8K} under our settings.
For the terminal-reward baseline, we assign reward $1$ if the generated final answer matches the
ground-truth answer and $0$ otherwise, and optimize the same GRPO objective. Figure \ref{fig:reward_comparison_bar} reports 5-shot \textsc{GSM8K} accuracy comparison across generation
lengths and decoding regimes. Overall, our dense self-aware reward is
consistently comparable and often better across settings, suggesting
stability from dense, model-consistent rewards. We also report our comparison results with other concurrent order learning methods \cite{hong2025improving, jazbec2025learning} in Appendix ~\ref{sec:order_learning}.

\section{Conclusion} \label{label:conclu}
We introduced \textbf{Self-Aware Scheduling (SAS)}, a plug-and-play framework for optimizing the
unmasking order of pretrained masked diffusion language models. Our theoretical analysis guarantees that optimizing our objective reduces an upper bound on the target mismatch, and achieves exactness under realizability.
Combined with reinforcement learning for policy optimization, SAS improves both structured and
open-ended reasoning across domains and scales---from Sudoku with a 1B model to math and code
reasoning with LLaDA-8B---and remains robust across generation lengths and semi-autoregressive
block decoding. Overall, SAS provides a general recipe for aligning any-order diffusion inference
with probabilistic objectives, and suggests a practical path toward self-improving diffusion LLMs
via iterative scheduling and model refinement. A natural future direction, in light of Remark~\ref{sec:parallel_remark}, is to combine SAS and dependency-awareness to accelerate the method further.
\section*{Limitations} \label{label:limit}
Several limitations remain. First, SAS learns from a teacher-forced self-aware reward, i.e., the frozen denoiser's pathwise likelihood on ground-truth trajectories. This creates a potential gap between training and free-running inference. Second, our current policies are trained largely on a per-task basis, so cross-task and cross-domain transfer are promising but not yet fully characterized. Third, while SAS is plug-and-play with respect to the frozen backbone at inference time, learning the order policy still requires additional training compute, and the policy introduces a small but non-zero inference overhead. Addressing these issues through better transfer, reduced teacher-forcing mismatch, and hybrid self-aware/task-level rewards is an important direction for future work.

\section*{Impact Statement}
This paper presents work whose goal is to advance the field of Machine
Learning. There are many potential societal consequences of our work, none
which we feel must be specifically highlighted here.


\bibliography{include/references}
\bibliographystyle{include/icml2026}

\newpage
\appendix
\onecolumn
\clearpage
\thispagestyle{empty} 

\vspace*{\fill} 

\begin{center}
    {\Large \textbf{Appendix Table of Contents}}
\end{center}
\vspace{1.5em}

\begin{enumerate}[leftmargin=3em, itemsep=6pt, topsep=0pt, label=\textbf{\arabic*.}]

    \item \textbf{Background} (Appendix~\ref{appendix:bg}, p.~\pageref{appendix:bg})
    \begin{itemize}[leftmargin=1.5em, itemsep=2pt, topsep=2pt, label=\textbullet]
        \item Masked diffusion models (Section~\ref{appendix:mdm})
        \item Unmasking schedules: sequential and parallel decoding (Section~\ref{sec:unmask_schedule})
        \item Heuristic Schedules (Section~\ref{sec:heuristics})
        \item Unmasking-order policy learning algorithm (Alg.~\ref{alg:grpo_order_short})
    \end{itemize}

    \item \textbf{Theoretical Guarantees} (Appendix~\ref{app:proof}, p.~\pageref{app:proof})
    \begin{itemize}[leftmargin=1.5em, itemsep=2pt, topsep=2pt, label=\textbullet]
        \item Parallelization error and conditional total correlation (Section~\ref{sec:tc})
        \item Proof of Theorem~\ref{thm:sas_joint_bound} (Section~\ref{sec:proof_theorem})
        \item Proof of Corollary~\ref{cor:expressivity_tight_bound} (Section~\ref{sec:joint_tightness})
        \item Proof of Remark~\ref{rem:expressivity_exactness} (Section~\ref{sec:perfect_model})
    \end{itemize}

    \item \textbf{Implementation Details} (Appendix~\ref{app:implementation}, p.~\pageref{app:implementation})
    \begin{itemize}[leftmargin=1.5em, itemsep=2pt, topsep=2pt, label=\textbullet]
        \item Order policy design goals and constraints (Section~\ref{sec:design_goal})
        \item Policy inputs, architecture, and action selection (Sections~\ref{sec:policy_input}--\ref{sec:action})
        \item Training (Section~\ref{sec:train_time}) and inference (Section~\ref{sec:inference}) details
    \end{itemize}

    \item \textbf{Training and Evaluation Results} (Appendix~\ref{sec:eval_result}, p.~\pageref{sec:eval_result})
    \begin{itemize}[leftmargin=1.5em, itemsep=2pt, topsep=2pt, label=\textbullet]
        \item Sudoku: additional analyses (Section~\ref{sec:sudoku_app})
        \item Math and code reasoning with LLaDA-8B (Section~\ref{sec:llada})
        \item Comparison with other order learning methods
        (Section~\ref{sec:order_learning})
    \end{itemize}

    \item \textbf{Comparison with Hidden-State Policy} (Appendix~\ref{app:hidden-order-policy}, p.~\pageref{app:hidden-order-policy})
    \begin{itemize}[leftmargin=1.5em, itemsep=2pt, topsep=2pt, label=\textbullet]
        \item Inputs, architecture, and action selection (Sections~\ref{sec:policy_hidden}--\ref{sec:candidate_hidden})
        \item Empirical comparison and discussion (Section~\ref{sec:comparison_hidden})
    \end{itemize}

\end{enumerate}

\vspace*{\fill} 
\clearpage
\section{Background} \label{appendix:bg}
\begin{figure*}[t]
    \centering
    \includegraphics[width=\textwidth]{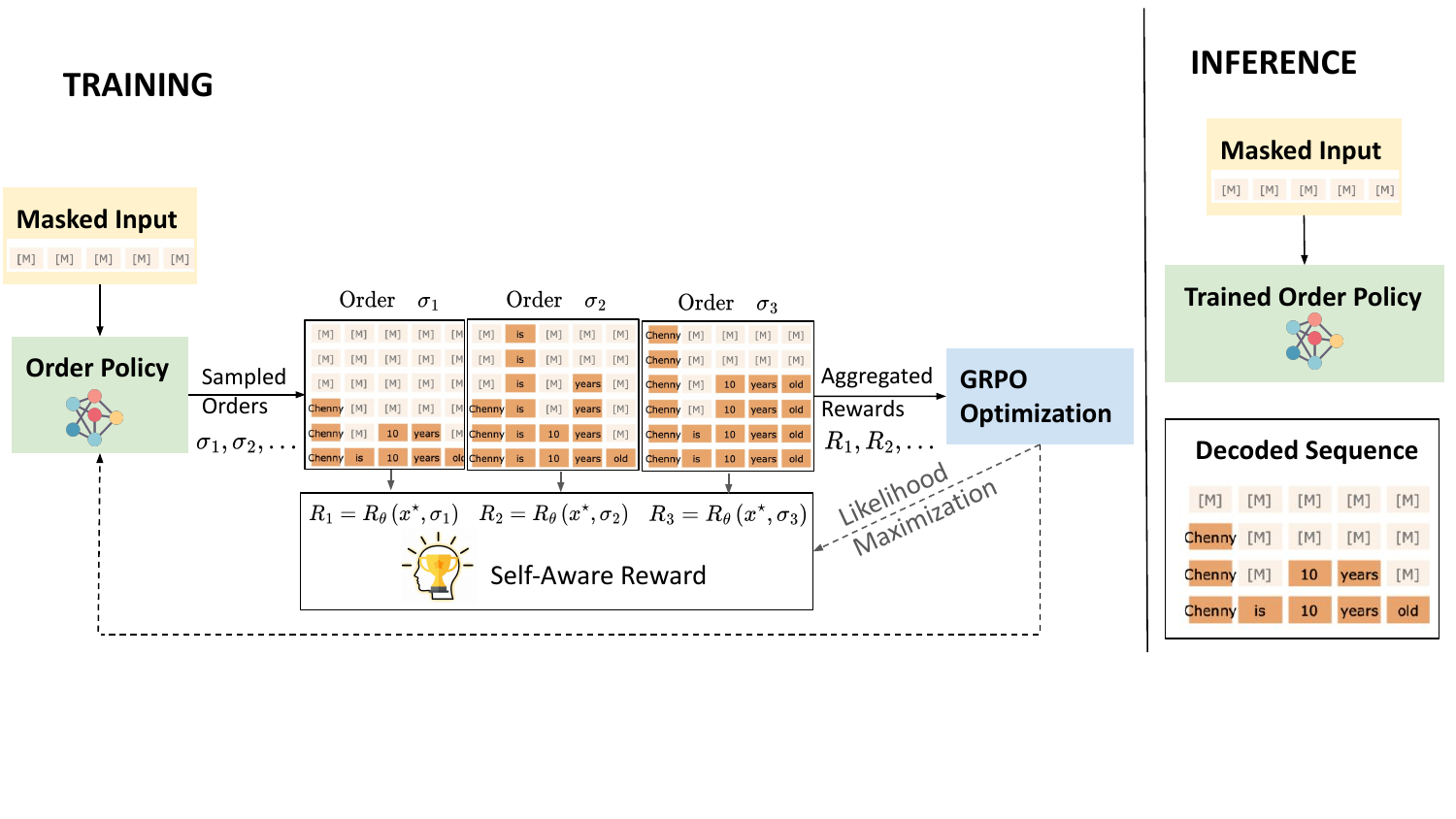}
    \caption{Overview of the Self-Aware Scheduling (SAS) framework.
Left (Training): The order policy $v_\phi$ samples multiple unmasking permutations ($\sigma_1, \sigma_2, \dots$) for a frozen diffusion model. These orders are evaluated using our Self-Aware Reward, which measures the cumulative log-likelihood of the generation under that specific order. The policy is then optimized via GRPO to favor orders that maximize the model's own confidence.
Right (Inference): The trained policy dictates the optimal unmasking sequence, guiding the diffusion model to generate high-quality outputs efficiently.}
    \label{fig:mintro_figure}
\end{figure*}
This section provides detailed background material supplementing the main article.
\subsection{Masked Diffusion Models}
\label{appendix:mdm}

Masked diffusion language models generate discrete sequences by iteratively demasking tokens, analogous to the denoising process in continuous diffusion models. We review the modeling framework and sampling procedures.

\paragraph{Notation.}
Let $\mathcal{X}$ denote a finite vocabulary and $n$ the sequence length. A sequence is $x = (x_1,\ldots,x_n) \in \mathcal{X}^n$. For a mask pattern $M \subseteq [n] := \{1,\ldots,n\}$, define the masked sequence $x^M \in (\mathcal{X} \cup \{\texttt{[MASK]}\})^n$ by
\begin{equation}
(x^M)_i := \begin{cases} \texttt{[MASK]} & \text{if } i \in M, \\ x_i & \text{if } i \notin M. \end{cases}
\end{equation}

\paragraph{Model.}
A masked diffusion model (or \emph{denoiser}) parameterized by $\theta$ defines conditional distributions over masked tokens given observed context. Specifically, the model $p_\theta$ provides position-wise predictions
\begin{equation}
p_\theta^i(\cdot \mid x^M), \qquad i \in M,
\end{equation}
representing the distribution over token $x_i$ conditioned on the partial observation $x^M$. Here $\theta$ denotes the learned parameters of the base model.

\paragraph{Training.}
Let $\pi$ denote the data distribution over $\mathcal{X}^n$ and $q(M)$ a distribution over mask patterns. The model parameters $\theta$ are trained via cross-entropy:
\begin{equation}
\mathcal{L}(\theta) := -\mathbb{E}_{x \sim \pi,\, M \sim q} \left[\sum_{i \in M} \log p_\theta^i(x_i \mid x^M)\right].
\end{equation}

\paragraph{Sampling.}
Generation proceeds by iteratively unmasking positions, a process known as \emph{decoding}. We denote by $\tilde{x}^{(t)}$ the partial sequence at decoding step $t$, where some positions contain tokens from $\mathcal{X}$ (revealed) and others contain \texttt{[MASK]} (not yet revealed). Starting from the fully masked sequence, at each step the algorithm selects which position(s) to unmask and samples their values from the model. The flexibility of the masked diffusion model is that it allows any order decoding in principle, and thus the unmasking order, or schedule, matters.

\subsection{Unmasking schedules}
\label{sec:unmask_schedule}
The unmasking schedule can be interpreted as an ``order of thought'' for generation. We introduce a learned policy $v_\phi$ parameterized by $\phi$, which takes a partial sequence as input and outputs a distribution over which position(s) to unmask next.
We distinguish two decoding regimes based on how many positions are unmasked per step:

\paragraph{Sequential decoding.}
Unmask one position per step over $n$ steps. Let $S_t \subseteq [n]$ denote the set of revealed positions at step $t$, and $M_t = [n] \backslash S_t$ denote the set of masked positions. Starting from $\tilde{x}^{(0)} = x^{[n]}$ (fully masked) with $S_0 = \emptyset$, at each step $t = 1,\ldots,n$:
\begin{enumerate}[itemsep=2pt,topsep=4pt]
\item Select next position: $i_t \sim v_\phi(\cdot \mid \tilde{x}^{(t-1)})$ where $v_\phi(\cdot \mid \tilde{x}^{(t-1)})$ outputs a distribution over $[n] \setminus S_{t-1}$
\item Sample token: $\tilde{x}^{(t)}_{i_t} \sim p_\theta^{i_t}(\cdot \mid \tilde{x}^{(t-1)})$
\item Update: $S_t \leftarrow S_{t-1} \cup \{i_t\}$; set $\tilde{x}^{(t)}_j = \tilde{x}^{(t-1)}_j$ for all $j \neq i_t$
\end{enumerate}
The complete run produces an \emph{unmasking order} $\sigma = (i_1,\ldots,i_n)$ and final sequence $\tilde{x}^{(n)} \in \mathcal{X}^n$.

\paragraph{Parallel decoding.}
To accelerate generation, unmask multiple positions simultaneously. Fix a step budget $K < n$ and block sizes $(b_1,\ldots,b_K)$ with $\sum_{k=1}^K b_k = n$. Let $S_k$ denote the set of revealed positions after $k$ steps. Starting from $\tilde{x}^{(0)} = x^{[n]}$ with $S_0 = \emptyset$, at each step $k = 1,\ldots,K$:
\begin{enumerate}[itemsep=2pt,topsep=4pt]
\item Select block $B_k \subseteq [n] \setminus S_{k-1}$ of size $b_k$ (typically by sampling $b_k$ positions from $v_\phi(\cdot \mid \tilde{x}^{(k-1)})$ without replacement)
\item Sample all tokens in block: $\tilde{x}^{(k)}_i \sim p_\theta^i(\cdot \mid \tilde{x}^{(k-1)})$ for each $i \in B_k$
\item Update: $S_k \leftarrow S_{k-1} \cup B_k$; set $\tilde{x}^{(k)}_j = \tilde{x}^{(k-1)}_j$ for all $j \notin B_k$
\end{enumerate}
Crucially, all positions within block $B_k$ are sampled using the \emph{same} partial context $\tilde{x}^{(k-1)}$, enabling parallelization but ignoring correlations between positions unmasked in the same block.

\paragraph{Induced distributions.}
Both decoding procedures induce distributions over output sequences. To characterize these distributions, we define notations for the decoding algorithm under \emph{teacher forcing}, where the policy $v_\phi$ and model $p_\theta$ are evaluated on partially revealed versions of a fixed target sequence $x$. This allows us to compute the probability the algorithm assigns to generating $x$ under a particular unmasking order.

For a target sequence $x$ and unmasking order $\sigma = (i_1,\ldots,i_n)$, define the \emph{teacher-forced trajectory} $\tilde{x}^{(0)}(\sigma),\ldots,\tilde{x}^{(n)}(\sigma)$ where $\tilde{x}^{(t)}(\sigma)$ reveals positions $\{i_1,\ldots,i_t\}$ from $x$:
\begin{equation}
\tilde{x}^{(t)}(\sigma)_j := \begin{cases} x_j & \text{if } j \in \{i_1,\ldots,i_t\}, \\ \texttt{[MASK]} & \text{otherwise}. \end{cases}
\end{equation}
In other words, $\tilde{x}^{(t)}(\sigma)$ is the partial sequence obtained by revealing the first $t$ positions in order $\sigma$ using their values from target $x$.

\noindent \textbf{Sequential case.}
Under teacher-forcing with target $x$ and order $\sigma$, the policy (parameterized by $\phi$) assigns probability
\begin{equation}
v_\phi(\sigma \mid x) := \prod_{t=1}^n v_\phi(i_t \mid \tilde{x}^{(t-1)}(\sigma))
\end{equation}
to the unmasking order $\sigma$, where each factor evaluates the policy $v_\phi$ on the teacher-forced state $\tilde{x}^{(t-1)}(\sigma)$. The model (parameterized by $\theta$) assigns path likelihood
\begin{equation}
p_\theta(x \mid \sigma) := \prod_{t=1}^n p_\theta^{i_t}(x_{i_t} \mid \tilde{x}^{(t-1)}(\sigma)),
\end{equation}
where each factor evaluates the model $p_\theta$ at position $i_t$ given teacher-forced state $\tilde{x}^{(t-1)}(\sigma)$. The sequential algorithm induces the joint distribution over sequences and orders
\begin{equation}
P_{\theta,\phi}^{\mathrm{seq}}(x, \sigma) := v_\phi(\sigma \mid x) \cdot p_\theta(x \mid \sigma),
\end{equation}
with marginal distribution over outputs
\begin{equation}
P_{\theta,\phi}^{\mathrm{seq}}(x) = \sum_{\sigma \in S_n} v_\phi(\sigma \mid x) \cdot p_\theta(x \mid \sigma).
\end{equation}

\noindent \textbf{Parallel case.}
For block decomposition $\beta = (B_1,\ldots,B_K)$, define the schedule likelihood under policy parameters $\phi$:
\begin{equation}
v_\phi(\beta \mid x) := \prod_{k=1}^K v_\phi(B_k \mid \tilde{x}^{(k-1)}(\beta)),
\end{equation}
where $\tilde{x}^{(k-1)}(\beta)$ denotes the teacher-forced state after revealing blocks $B_1,\ldots,B_{k-1}$ from target $x$. The model assigns path likelihood
\begin{equation}
p_\theta(x \mid \beta) := \prod_{k=1}^K \prod_{i \in B_k} p_\theta^i(x_i \mid \tilde{x}^{(k-1)}(\beta)).
\end{equation}
The parallel algorithm induces the joint distribution
\begin{equation}
\label{eq:parallel_joint}
P_{\theta,\phi}^{\mathrm{par}}(x, \beta) := v_\phi(\beta \mid x) \cdot p_\theta(x \mid \beta),
\end{equation}
with marginal distribution
\begin{equation}
\label{eq:parallel_dist}
P_{\theta,\phi}^{\mathrm{par}}(x) = \sum_{\beta} v_\phi(\beta \mid x) \cdot p_\theta(x \mid \beta).
\end{equation}

\subsection{Heuristic Schedules}
\label{sec:heuristics}
Heuristic schedules can be formalized as a special case of our order policy framework where the policy $v_\phi$ is \textbf{deterministic and non-trainable}. These methods define a fixed policy that greedily selects the position $i$ maximizing a predefined score computed from the
model's position-wise predictive distributions $p_\theta^i(\cdot\mid \tilde{x}^{(t-1)})$.
Recent works~\citep{nie2025large,kimtrain} consider greedy schedules based on per-position uncertainty.

\paragraph{Per-position uncertainty scores.}
At each sequential step $t$, for a masked position $i \in M_t$, let $p(a) \coloneqq p_\theta^i(a \mid \tilde{x}^{(t-1)})$ be the local probability distribution. Let $\hat{a} \coloneqq \arg\max_{a} p(a)$ denote the most likely token. We define the following scores (higher is better):
\begin{flalign*}
&\textbf{Confidence:} && s_{\mathrm{conf}}(i) \coloneqq p(\hat{a}), &\\
&\textbf{Margin:}     && s_{\mathrm{marg}}(i) \coloneqq p(\hat{a}) - \max_{a \neq \hat{a}} p(a), &\\
&\textbf{Entropy:}    && s_{\mathrm{ent}}(i)  \coloneqq \sum_{a \in \mathcal{X}} p(a) \log p(a). &
\end{flalign*}

\paragraph{Sequential heuristics.}
At sequential step $t$, given the current state $\tilde{x}^{(t-1)}$, choose
\[
i_t \in \arg\max_{i\in M_{t-1}} s(i;\tilde{x}^{(t-1)}),
\]
where $s\in\{s_{\mathrm{conf}}, s_{\mathrm{marg}}, s_{\mathrm{ent}}\}$.

\paragraph{Parallel heuristics.}
At parallel step $k$ with block size $b_k$, given the current state $\tilde{x}^{(k-1)}$, choose the
block $B_k\subseteq M_{k-1}$ as the top-$b_k$ masked indices under the score:
\[
B_k := \mathrm{Top}\text{-}b_k\Big(\{\, s(i;\tilde{x}^{(k-1)}) : i\in M_{k-1}\,\}\Big).
\]
All positions in $B_k$ are then sampled from $p_\theta^i(\cdot\mid \tilde{x}^{(k-1)})$ using the same
shared context.

\paragraph{Discussion.}
These heuristics can be viewed as deterministic policies induced by the scores and the current decoding state. While they provide a
strong baseline compared to random scheduling, they remain inherently myopic: they rely exclusively
on local, instantaneous uncertainty in $p_\theta^i(\cdot\mid \tilde{x}^{(t-1)})$ at the current state, and do
not explicitly optimize a global objective over entire unmasking trajectories. As a result, they can
prioritize ``easy'' positions early even when revealing a different position would better shape the
future conditional landscape, leaving room for learned schedules that perform long-horizon credit
assignment.

\begin{algorithm}[tb]
\caption{Unmasking-Order Policy Learning}
\label{alg:grpo_order_short}
\begin{algorithmic}[1]
\REQUIRE Initial order policy $v_{\phi_{\text{init}}}$; frozen diffusion model $p_\theta$;
dataset $\mathcal{D}$ of pairs $(c,x^\star)$; hyperparameters $\epsilon,\beta,\mu$; group size $G$.
\STATE Initialize policy $\phi \leftarrow \phi_{\text{init}}$
\FOR{iteration $=1,\ldots,I$}
    \STATE reference policy $v_{\text{ref}} \leftarrow v_\phi$
    \FOR{step $=1,\ldots,M$}
        \STATE Sample minibatch $\mathcal{B}\sim \mathcal{D}$
        \STATE old policy $v_{\text{old}} \leftarrow v_\phi$
        \STATE Sample $G$ orders $\{\sigma^{(i)}\}_{i=1}^G \sim v_{\text{old}}(\cdot\mid c,x^\star)$ for each $(c,x^\star)\in\mathcal{B}$ (teacher-forced rollout)
        \STATE Compute self-aware returns $\{R_i\}$ using Eq.~\eqref{eq:self_aware_reward} (frozen $p_\theta$)
        \STATE Compute group-relative advantages $\{\widehat{A}_{i}\}$
        \FOR{GRPO update $=1,\ldots,\mu$}
            \STATE Update $\phi$ by maximizing the clipped GRPO objective
        \ENDFOR
    \ENDFOR
\ENDFOR
\ENSURE Trained order policy $v_\phi$
\end{algorithmic}
\end{algorithm}

\section{Theoretical Guarantees}
\label{app:proof}
In this part, we provide theoretical analysis and detailed proofs of the proposed theorems and corollaries in this paper. In Appendix \ref{sec:tc}, we first focus on the quantification of the parallel decoding errors mentioned in Remark \ref{sec:parallel_remark} of the main article; in particular, we derive total variation bound in Corollary \ref{cor:tv_pi_par}. We then move to the proof of Theorem~\ref{thm:sas_joint_bound} and Corollary~\ref{cor:expressivity_tight_bound}, in Appendices \ref{sec:proof_theorem} and \ref{sec:joint_tightness} respectively.
\subsection{Parallelization error and conditional total correlation}
\label{sec:tc}
In this section, we provide more details on the analysis of the error caused by parallel decoding. More specifically, we will use $\operatorname{KL}(P_{\theta,\phi}^{\mathrm{seq}}, P_{\theta,\phi}^{\mathrm{par}})$ to characterize the additional discrepancy introduced by
parallel decoding.

We first introduce the concept of conditional total correlation.

\paragraph{Conditional total correlation.}
For random variables $X\in\mathcal{X}^N$ under a distribution $P$, and disjoint sets
$A,B\subseteq[N]$, we define the conditional total correlation as
\begin{equation}
\begin{split}
    \operatorname{TC}_{P}(X_B\mid X_A)
&\;:=\;
\sum_{i\in B} H_{P}(X_i\mid X_A) \;-\; H_{P}(X_B\mid X_A)\\
&\;=\;
\operatorname{KL}\!\left(P(X_B\mid X_A)\,\Big\|\,\prod_{i\in B}P(X_i\mid X_A)\right).
\end{split}    
\end{equation}

This quantity is zero if and only if the variables in $B$ are conditionally independent
given $X_A$ under $P$. While parallel decoding updates a block $B_k$ using a single shared context $\tilde{x}^{(k-1)}$,
which prevents within-block conditioning on newly sampled tokens. This induces an
information loss whenever the true conditional dependence within $B_k$ is nontrivial.

\begin{theorem}[Parallelization error bounded by conditional TC]
\label{thm:tc_gap}
Consider a block schedule $(B_1,\ldots,B_K)$ (e.g., obtained by chunking a fine-grained order),
and assume parallel updates sample each $X_i$ for $i\in B_k$ independently given the prefix
$X_{B_{<k}}$. Then the additional mismatch introduced by parallelization is governed by the
sum of within-block conditional total correlations under the sequential law:
\begin{equation}
\operatorname{KL}(P_{\theta,\phi}^{\mathrm{seq}}, P_{\theta,\phi}^{\mathrm{par}})
\;\;=\;
\sum_{k=1}^K \mathbb{E}_{X\sim P_{\theta,\phi}^{\mathrm{seq}}}\big[\operatorname{TC}_{P_{\theta,\phi}^{\mathrm{seq}}}(X_{B_k}\mid X_{B_{<k}})\big].
\end{equation}
\end{theorem}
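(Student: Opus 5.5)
The approach is the chain rule for KL divergence applied block by block, for a \emph{fixed} block schedule $(B_1,\ldots,B_K)$. Throughout I read $P^{\mathrm{seq}}_{\theta,\phi}$ as the law of $X$ obtained by decoding sequentially inside each block, so that within-block dependencies are captured. I read $P^{\mathrm{par}}_{\theta,\phi}$ as the law obtained when each $X_i$, $i\in B_k$, is drawn from the \emph{one-site marginal} of the sequential law given the prefix, i.e.\ from $P^{\mathrm{seq}}(X_i\mid X_{B_{<k}})$. This is exactly the reading under which the stated identity is an equality: parallel sampling replaces the block conditional by the product of its own marginals.

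\textbf{Step 1: factorize both laws.} By the chain rule over blocks,
\[
P^{\mathrm{seq}}(x)=\prod_{k=1}^K P^{\mathrm{seq}}(x_{B_k}\mid x_{B_{<k}}),\qquad
P^{\mathrm{par}}(x)=\prod_{k=1}^K \prod_{i\in B_k} P^{\mathrm{seq}}(x_i\mid x_{B_{<k}}).
\]
The second factorization is the modelling assumption of the theorem, namely independent sampling given the prefix with the sequential one-site marginals.

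\textbf{Step 2: take the log-ratio and expectation.} This gives
\[
\operatorname{KL}(P^{\mathrm{seq}}\|P^{\mathrm{par}})=\sum_{k=1}^K \mathbb{E}_{X\sim P^{\mathrm{seq}}}\Big[\log\frac{P^{\mathrm{seq}}(X_{B_k}\mid X_{B_{<k}})}{\prod_{i\in B_k}P^{\mathrm{seq}}(X_i\mid X_{B_{<k}})}\Big].
\]
Condition on $X_{B_{<k}}$ (tower property). The inner expectation over $X_{B_k}\sim P^{\mathrm{seq}}(\cdot\mid X_{B_{<k}})$ is precisely
\[
\operatorname{KL}\Big(P^{\mathrm{seq}}(X_{B_k}\mid X_{B_{<k}})\,\Big\|\,\prod_{i\in B_k}P^{\mathrm{seq}}(X_i\mid X_{B_{<k}})\Big),
\]
which is the pointwise conditional total correlation given in the definition above. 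Averaging over the prefix gives the stated sum. Equivalently, in entropy form, each term equals $\sum_{i\in B_k}H(X_i\mid X_{B_{<k}})-H(X_{B_k}\mid X_{B_{<k}})$, where cross terms cancel because the marginals are themselves derived from $P^{\mathrm{seq}}$.

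\textbf{Main obstacle: pinning down $P^{\mathrm{par}}$ so the equality is exact.} If the parallel sampler instead uses the denoiser's raw conditionals $p_\theta^i(\cdot\mid x^{(k-1)})$, these need not coincide with the sequential law's one-site marginals. The same computation then produces, for each $i\in B_k$, an extra nonnegative term
\[
\mathbb{E}\,\operatorname{KL}\big(P^{\mathrm{seq}}(X_i\mid X_{B_{<k}})\,\|\,p_\theta^i\big),
\]
by the standard decomposition $\operatorname{KL}(P\|\prod q_i)=\operatorname{KL}(P\|\prod P_i)+\sum_i \operatorname{KL}(P_i\|q_i)$. In that case the identity degrades to an inequality $\ge$.

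I would therefore state explicitly the consistency condition under which equality holds: the parallel per-site conditionals equal the sequential marginals. Under that condition this extra term vanishes. I would then remark that the general case gives a lower bound consisting of the TC sum plus marginal mismatch. For a policy-driven, random schedule, I would apply the same argument conditionally on the schedule, or absorb the schedule into the prefix variables.
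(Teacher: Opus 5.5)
Your proposal is correct and follows the paper's proof essentially step for step. Both arguments factorize $P^{\mathrm{seq}}$ by the block chain rule, write $P^{\mathrm{par}}$ as the product of the sequential one-site conditionals given the prefix (exactly the assumption the paper imposes in its Step 1), and identify each block's expected log-ratio with the conditional total correlation; your extra remark is also correct: sampling from the raw denoiser conditionals $p_\theta^i$ would add the nonnegative terms $\mathbb{E}\,\operatorname{KL}\bigl(P^{\mathrm{seq}}(X_i\mid X_{B_{<k}})\,\|\,p_\theta^i\bigr)$, so the TC sum would then only lower-bound the KL, which makes explicit a modelling choice the paper leaves implicit.
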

\begin{proof}
For brevity, write $P^{\mathrm{seq}}:=P_{\theta,\phi}^{\mathrm{seq}}$ and
$P^{\mathrm{par}}:=P_{\theta,\phi}^{\mathrm{par}}$.

\paragraph{Step 1: Factorizations under the block schedule.}
Fix ordered blocks $(B_1,\ldots,B_K)$ and define $B_{<k}:=\bigcup_{j<k}B_j$.
By the chain rule grouped by blocks, $P^{\mathrm{seq}}$ factorizes as
\begin{equation}
\label{eq:block_chain_seq}
P^{\mathrm{seq}}(x)\;=\;\prod_{k=1}^K P^{\mathrm{seq}}(x_{B_k}\mid x_{B_{<k}}).
\end{equation}
Under the theorem assumption (within-block independent sampling given the prefix),
\begin{equation}
\label{eq:par_block_conditional_new}
P^{\mathrm{par}}(x_{B_k}\mid x_{B_{<k}})
\;=\;
\prod_{i\in B_k} P^{\mathrm{seq}}(x_i\mid x_{B_{<k}}),
\end{equation}
and therefore
\begin{equation}
\label{eq:par_factorized_new}
P^{\mathrm{par}}(x)
\;=\;
\prod_{k=1}^K \prod_{i\in B_k} P^{\mathrm{seq}}(x_i\mid x_{B_{<k}}).
\end{equation}

\paragraph{Step 2: Expand $\operatorname{KL}(P^{\mathrm{seq}}\|P^{\mathrm{par}})$ and identify conditional total correlation.}
If $P^{\mathrm{seq}}\not\ll P^{\mathrm{par}}$, then
$\operatorname{KL}(P^{\mathrm{seq}}\|P^{\mathrm{par}})=+\infty$ and the bound is trivial.
Assume henceforth $P^{\mathrm{seq}}\ll P^{\mathrm{par}}$. Then
\begin{align*}
\operatorname{KL}\!\big(P^{\mathrm{seq}}\|P^{\mathrm{par}}\big)
&=\mathbb{E}_{X\sim P^{\mathrm{seq}}}\!\left[\log\frac{P^{\mathrm{seq}}(X)}{P^{\mathrm{par}}(X)}\right]\\
&=\mathbb{E}_{X\sim P^{\mathrm{seq}}}\!\left[
\sum_{k=1}^K \log P^{\mathrm{seq}}(X_{B_k}\mid X_{B_{<k}})
-
\sum_{k=1}^K \sum_{i\in B_k}\log P^{\mathrm{seq}}(X_i\mid X_{B_{<k}})
\right]\\
&\hspace{4cm}\text{(by \eqref{eq:block_chain_seq} and \eqref{eq:par_factorized_new})}\\
&=\sum_{k=1}^K
\mathbb{E}_{X\sim P^{\mathrm{seq}}}\!\left[
\log\frac{P^{\mathrm{seq}}(X_{B_k}\mid X_{B_{<k}})}
{\prod_{i\in B_k}P^{\mathrm{seq}}(X_i\mid X_{B_{<k}})}
\right].
\end{align*}
By the KL form of conditional total correlation (with respect to $P^{\mathrm{seq}}$),
\[
\operatorname{TC}_{P^{\mathrm{seq}}}(X_{B_k}\mid X_{B_{<k}})
=
\mathbb{E}_{X\sim P^{\mathrm{seq}}}\!\left[
\log\frac{P^{\mathrm{seq}}(X_{B_k}\mid X_{B_{<k}})}
{\prod_{i\in B_k}P^{\mathrm{seq}}(X_i\mid X_{B_{<k}})}
\right],
\]
and hence we obtain the exact identity
\begin{equation}
\label{eq:kl_equals_tc_sum_new}
\operatorname{KL}\!\big(P^{\mathrm{seq}}\|P^{\mathrm{par}}\big)
\;=\;
\sum_{k=1}^K
\mathbb{E}_{X\sim P^{\mathrm{seq}}}\!\left[
\operatorname{TC}_{P^{\mathrm{seq}}}(X_{B_k}\mid X_{B_{<k}})
\right].
\end{equation}
\end{proof}
The theorem explains why parallel decoding can
incur an intrinsic error when blocks contain highly dependent variables (high conditional TC),
motivating schedules that reveal strongly coupled variables in a dependency-aware manner.

\paragraph{Quantification of error from data distribution}
Theorem~\ref{thm:tc_gap} quantifies the \emph{parallelization gap} between the sequential and parallel
output laws. We now relate this to the final deviation of parallel decoding from the data
distribution $\pi$ using total variation distance.

\begin{corollary}[Parallel decoding error via sequential error + TC]
\label{cor:tv_pi_par}
For any $(\theta,\phi)$ and any ordered block schedule $(B_1,\ldots,B_K)$ satisfying the within-block
conditional independence assumption of Theorem~\ref{thm:tc_gap}$,$
\begin{equation}
\label{eq:tv_pi_par_bound}
\operatorname{TV}\!\big(\pi,\,P_{\theta,\phi}^{\mathrm{par}}\big)
\;\le\;
\sqrt{\tfrac12\,\operatorname{KL}\!\big(\pi\|P_{\theta,\phi}^{\mathrm{seq}}\big)}
\;+\;
\sqrt{\tfrac12\sum_{k=1}^K
\mathbb{E}_{X\sim P_{\theta,\phi}^{\mathrm{seq}}}\!\Big[
\operatorname{TC}_{P_{\theta,\phi}^{\mathrm{seq}}}\!\big(X_{B_k}\mid X_{B_{<k}}\big)
\Big]}.
\end{equation}
\end{corollary}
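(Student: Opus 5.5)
The plan is to combine three ingredients: the triangle inequality for total variation, Pinsker's inequality, and the exact identity of Theorem~\ref{thm:tc_gap}. Since $\operatorname{TV}$ is a metric on distributions over $\mathcal{X}^N$, we first write
\[
\operatorname{TV}\big(\pi,P_{\theta,\phi}^{\mathrm{par}}\big)\le \operatorname{TV}\big(\pi,P_{\theta,\phi}^{\mathrm{seq}}\big)+\operatorname{TV}\big(P_{\theta,\phi}^{\mathrm{seq}},P_{\theta,\phi}^{\mathrm{par}}\big),
\]
with the sequential law $P_{\theta,\phi}^{\mathrm{seq}}$ as the intermediate point. This splits the error into a sequential-decoding mismatch and a parallelization gap, and we bound each term separately.

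For the first term, Pinsker's inequality gives $\operatorname{TV}(\pi,P^{\mathrm{seq}}_{\theta,\phi})\le\sqrt{\tfrac12\operatorname{KL}(\pi\|P^{\mathrm{seq}}_{\theta,\phi})}$. Pinsker is symmetric in the sense that it bounds $\operatorname{TV}(P,Q)$ by either $\operatorname{KL}(P\|Q)$ or $\operatorname{KL}(Q\|P)$. We therefore choose the direction $\operatorname{KL}(\pi\|P^{\mathrm{seq}})$, which matches the statement and is the quantity controlled by $\mathcal{L}_{\mathrm{SAS}}-H(\pi)$ via Theorem~\ref{thm:sas_joint_bound}. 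If $P^{\mathrm{seq}}$ fails to dominate $\pi$, the right-hand side is $+\infty$ and the bound holds trivially.

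For the second term, Pinsker again gives $\operatorname{TV}(P^{\mathrm{seq}},P^{\mathrm{par}})\le\sqrt{\tfrac12\operatorname{KL}(P^{\mathrm{seq}}\|P^{\mathrm{par}})}$, this time in the direction $\operatorname{KL}(P^{\mathrm{seq}}\|P^{\mathrm{par}})$. Under the within-block conditional independence assumption, Theorem~\ref{thm:tc_gap} says this KL equals $\sum_k\mathbb{E}_{X\sim P^{\mathrm{seq}}}[\operatorname{TC}_{P^{\mathrm{seq}}}(X_{B_k}\mid X_{B_{<k}})]$. Substituting this identity and adding the two square-root bounds yields \eqref{eq:tv_pi_par_bound}.

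The only real obstacle is checking that both KL directions line up with the quantities in the statement, and that the absolute-continuity edge cases are harmless. In each case an infinite KL makes the corresponding bound vacuous but still valid, so the argument goes through; the remaining work is routine.
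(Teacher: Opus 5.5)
Your proposal is correct and matches the paper's proof: triangle inequality for $\operatorname{TV}$ through $P_{\theta,\phi}^{\mathrm{seq}}$, Pinsker's inequality on each term (with $\operatorname{KL}(\pi\|P^{\mathrm{seq}}_{\theta,\phi})$ and $\operatorname{KL}(P^{\mathrm{seq}}_{\theta,\phi}\|P^{\mathrm{par}}_{\theta,\phi})$), then substitution of the exact identity from Theorem~\ref{thm:tc_gap}. Your handling of the infinite-KL edge cases is a harmless extra that the paper leaves implicit.
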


\begin{proof}
By the triangle inequality of total variation,
\[
\operatorname{TV}\!\big(\pi,P_{\theta,\phi}^{\mathrm{par}}\big)
\le
\operatorname{TV}\!\big(\pi,P_{\theta,\phi}^{\mathrm{seq}}\big)
+
\operatorname{TV}\!\big(P_{\theta,\phi}^{\mathrm{seq}},P_{\theta,\phi}^{\mathrm{par}}\big).
\]
Applying Pinsker's inequality to each term yields
\[
\operatorname{TV}\!\big(\pi,P_{\theta,\phi}^{\mathrm{seq}}\big)
\le
\sqrt{\tfrac12\,\operatorname{KL}\!\big(\pi\|P_{\theta,\phi}^{\mathrm{seq}}\big)},
\qquad
\operatorname{TV}\!\big(P_{\theta,\phi}^{\mathrm{seq}},P_{\theta,\phi}^{\mathrm{par}}\big)
\le
\sqrt{\tfrac12\,\operatorname{KL}\!\big(P_{\theta,\phi}^{\mathrm{seq}}\|P_{\theta,\phi}^{\mathrm{par}}\big)}.
\]
Finally, Theorem~\ref{thm:tc_gap} gives the exact identity
\[
\operatorname{KL}\!\big(P_{\theta,\phi}^{\mathrm{seq}}\|P_{\theta,\phi}^{\mathrm{par}}\big)
=
\sum_{k=1}^K \mathbb{E}_{X\sim P_{\theta,\phi}^{\mathrm{seq}}}\!\Big[
\operatorname{TC}_{P_{\theta,\phi}^{\mathrm{seq}}}\!\big(X_{B_k}\mid X_{B_{<k}}\big)
\Big],
\]
which proves~\eqref{eq:tv_pi_par_bound}.
\end{proof}

\paragraph{Discussion.}
Corollary~\ref{cor:tv_pi_par} separates the parallel decoding error into (i) a \emph{sequential}
mismatch term that can be reduced by improving the denoiser and/or learning better schedules (as in
Theorem~\ref{thm:sas_joint_bound}), and (ii) an \emph{intrinsic parallelization term} controlled by
within-block conditional dependence. In particular, the parallelization term vanishes only when the
tokens within each block are conditionally independent given the prefix (i.e., zero conditional
total correlation), explaining why overly aggressive parallel updates can degrade accuracy.

\begin{remark}
    The first term in \eqref{eq:tv_pi_par_bound} can be viewed as the statistical learning error, while the second term corresponds to the numerical error. The design of order schedules in discrete diffusion models can impact both sources of error. In contrast, for continuous diffusion models, scalar schedules affect only the numerical error and lead to statistically equivalent models under KL divergence in path space \cite{chen2025lipschitz,kingma2021variational}.
\end{remark}

\subsection{Proof of Theorem~\ref{thm:sas_joint_bound}}
\label{sec:proof_theorem}
We then proceed to prove that our proposed objective, $\mathcal{L}_{\mathrm{SAS}}$, bounds the Kullback-Leibler divergence between the data distribution and the model-induced distribution, up to a constant term.

\begingroup
\def\thetheorem{\ref{thm:sas_joint_bound}}
\begin{theorem}[Self-Awareness as Joint Distribution Matching]
Let $Q_{\phi}(x, \sigma) \coloneqq \pi(x)v_{\phi}(\sigma|x)$ and $P_{\theta, \phi}(x, \sigma)\coloneqq p_\theta(x|\sigma) v_\phi(\sigma|x)$ be the data-policy and model-policy joint distributions, respectively.
The self-aware loss $\mathcal{L}_{\mathrm{SAS}}$ satisfies the following identity and bound:
\begin{equation}
\label{eq:joint_identity_restate} 
\begin{split}
    \operatorname{KL}(\pi(x) \| P_{\theta,\phi}^{\mathrm{seq}}(x)) \;&\le\; \mathcal{L}_{\mathrm{SAS}}(\theta,\phi) - H(\pi) \\
&\;=\; 
\operatorname{KL}(Q_{\phi}(x,\sigma) \| P_{\theta, \phi}(x,\sigma)) 
,
\end{split}
\end{equation}
where $H(\pi)$ is the constant data entropy. 
Consequently, minimizing $\mathcal{L}_{\mathrm{SAS}}$ is equivalent to minimizing the joint distributional mismatch, which upper bounds the generation error $\operatorname{KL}(\pi(x) \| P_{\theta,\phi}^{\mathrm{seq}}(x))$.
\end{theorem}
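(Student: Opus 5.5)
The plan is to prove the equality first, by direct expansion of the joint KL, and then get the inequality either from Theorem~\ref{thm:elbo} or from the chain rule (data-processing) for KL. Throughout, I read $H(\pi)$ in the statement as $-H(\pi)$ being subtracted, i.e.\ the correct constant is the entropy with a sign. Concretely, $\mathcal{L}_{\mathrm{SAS}} = -\mathbb{E}\log p_\theta$ is a cross-entropy, so the identity should read $\mathcal{L}_{\mathrm{SAS}} - H(\pi) = \operatorname{KL}(Q_\phi\|P_{\theta,\phi})$ with $H(\pi)=-\mathbb{E}_\pi\log\pi$. I will also assume $v_\phi(\cdot\mid x)$ is a genuine probability distribution over orders for each $x$, so that $Q_\phi$ is normalized.

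\textbf{Step 1 (equality).} Write
\[
\operatorname{KL}(Q_\phi\|P_{\theta,\phi})=\mathbb{E}_{x\sim\pi}\,\mathbb{E}_{\sigma\sim v_\phi(\cdot\mid x)}\Big[\log\frac{\pi(x)v_\phi(\sigma\mid x)}{p_\theta(x\mid\sigma)v_\phi(\sigma\mid x)}\Big].
\]
The policy factor $v_\phi(\sigma\mid x)$ appears in both numerator and denominator and cancels pointwise on the support of $Q_\phi$. This cancellation is exactly where the teacher-forced definition matters: the same $v_\phi(\sigma\mid x)$ is used in both joints. What remains is
\[
\mathbb{E}_\pi\log\pi(x)-\mathbb{E}_{x\sim\pi}\mathbb{E}_{\sigma\sim v_\phi(\cdot\mid x)}\log p_\theta(x\mid\sigma)=-H(\pi)+\mathcal{L}_{\mathrm{SAS}}(\theta,\phi).
\]
If $p_\theta(x\mid\sigma)=0$ somewhere on the support of $Q_\phi$, both sides equal $+\infty$, so the identity still holds.

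\textbf{Step 2 (inequality).} I would use the chain rule for KL:
\[
\operatorname{KL}(Q_\phi\|P_{\theta,\phi})=\operatorname{KL}(Q_\phi(x)\|P_{\theta,\phi}(x))+\mathbb{E}_{x\sim\pi}\operatorname{KL}\big(Q_\phi(\sigma\mid x)\|P_{\theta,\phi}(\sigma\mid x)\big).
\]
The $x$-marginal of $Q_\phi$ is $\pi$, and the $x$-marginal of $P_{\theta,\phi}$ is $P^{\mathrm{seq}}_{\theta,\phi}$ by \eqref{eq:sequential_dist}. Since the conditional KL term is nonnegative, the first term is bounded by the joint KL. This is the data-processing inequality under the marginalization map $(x,\sigma)\mapsto x$.

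As a cross-check, the same bound follows from Theorem~\ref{thm:elbo}. We have $\operatorname{KL}(\pi\|P^{\mathrm{seq}})=-H(\pi)-\mathbb{E}_\pi\log P^{\mathrm{seq}}$, and Theorem~\ref{thm:elbo} gives $-\mathbb{E}_\pi\log P^{\mathrm{seq}}\le\mathcal{L}_{\mathrm{SAS}}$. Combining these gives the claimed inequality.

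\textbf{Main obstacle.} The main subtlety is that $P_{\theta,\phi}(x,\sigma)$ must be a normalized joint distribution, so that the KL is a true divergence and the chain rule applies. The factor $v_\phi(\sigma\mid x)$ is evaluated on teacher-forced states of $x$ itself. Summing $\sum_{x,\sigma}v_\phi(\sigma\mid x)p_\theta(x\mid\sigma)$ gives $1$ only because the generative sampling process, with positions chosen from revealed states and tokens drawn from $p_\theta$, produces exactly this product as its probability of the pair $(x,\sigma)$. I would verify this by telescoping over steps: at step $t$, the state $\tilde x^{(t-1)}$ depends only on $x_{i_1},\dots,x_{i_{t-1}}$ and the order, so one can sum out $x_{i_n}$, then $i_n$, and so on inductively.

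If normalization failed, the argument via Theorem~\ref{thm:elbo} would still deliver the inequality, since it only needs $P^{\mathrm{seq}}(x)=\mathbb{E}_{\sigma\sim v_\phi(\cdot\mid x)}p_\theta(x\mid\sigma)$. Step 1 is purely algebraic and does not depend on normalization. So the statement holds either way.
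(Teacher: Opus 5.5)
Your proposal is correct and follows the paper's proof. Step 1 is the same direct expansion in which the $v_\phi(\sigma\mid x)$ factors cancel, and Step 2 is the same marginalization (data-processing) argument, which you justify explicitly through the chain rule for KL, with the $x$-marginals identified as $\pi$ and $P^{\mathrm{seq}}_{\theta,\phi}$. Your normalization check and your second derivation of the inequality from Theorem~\ref{thm:elbo} are valid additions the paper does not spell out. Also, no reinterpretation of the sign of $H(\pi)$ is needed, since the statement already uses $H(\pi)=-\mathbb{E}_\pi\log\pi$.
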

\addtocounter{theorem}{-1} 
\endgroup

\begin{proof}
Recall the induced sequential marginal
\[
P_{\theta,\phi}^{\mathrm{seq}}(x)
=\sum_{\sigma\in S_n} v_\phi(\sigma\mid x)\,p_\theta(x\mid\sigma),
\]
and the joint distributions defined as
\[
Q_{\phi}(x,\sigma)=\pi(x)\,v_\phi(\sigma\mid x),
\qquad
P_{\theta,\phi}(x,\sigma)=v_\phi(\sigma\mid x)\,p_\theta(x\mid\sigma).
\]

\paragraph{Step 1: Identity $\;\operatorname{KL}(Q_\phi\|P_{\theta,\phi})=\mathcal{L}_{\mathrm{SAS}}(\theta,\phi)-H(\pi)$.}
By definition,
\begin{align*}
\operatorname{KL}(Q_\phi\|P_{\theta,\phi})
&=\sum_{x,\sigma} Q_\phi(x,\sigma)\log\frac{Q_\phi(x,\sigma)}{P_{\theta,\phi}(x,\sigma)}\\
&=\mathbb{E}_{x\sim\pi,\ \sigma\sim v_\phi(\cdot\mid x)}
\left[
\log\frac{\pi(x)v_\phi(\sigma\mid x)}{v_\phi(\sigma\mid x)p_\theta(x\mid\sigma)}
\right]\\
&=\mathbb{E}_{x\sim\pi}[\log\pi(x)]-\mathbb{E}_{x\sim\pi,\ \sigma\sim v_\phi(\cdot\mid x)}[\log p_\theta(x\mid\sigma)]\\
&=-H(\pi)+\mathcal{L}_{\mathrm{SAS}}(\theta,\phi),
\end{align*}
where $H(\pi)=-\mathbb{E}_{x\sim\pi}[\log\pi(x)]$.

\paragraph{Step 2: Bound $\;\operatorname{KL}(\pi\|P_{\theta,\phi}^{\mathrm{seq}})\le \operatorname{KL}(Q_\phi\|P_{\theta,\phi})$.} Let $Q_\phi^X$ and $P_{\theta,\phi}^X$ denote the marginals of $Q_\phi$ and $P_{\theta,\phi}$ on $x$.
By construction,
\[
Q_\phi^X(x)=\sum_\sigma Q_\phi(x,\sigma)=\pi(x),
\qquad
P_{\theta,\phi}^X(x)=\sum_\sigma P_{\theta,\phi}(x,\sigma)=P_{\theta,\phi}^{\mathrm{seq}}(x).
\]
KL divergence contracts under marginalization (data processing), hence
\[
\operatorname{KL}(\pi\|P_{\theta,\phi}^{\mathrm{seq}})
=\operatorname{KL}(Q_\phi^X\|P_{\theta,\phi}^X)
\le \operatorname{KL}(Q_\phi\|P_{\theta,\phi}).
\]
Combining Step 1 and Step 2 yields Eq.~\eqref{eq:joint_identity}, completing the proof.
\end{proof}

\subsection{Proof of Corollary~\ref{cor:expressivity_tight_bound}}
\label{sec:joint_tightness}
We then move to prove the tightness of the bound derived in Theorem~\ref{thm:sas_joint_bound} can be achieved by matching a zero joint KL divergence.

\vspace{5pt}
\noindent \textbf{Corollary \ref{cor:expressivity_tight_bound} (Joint realizability and tightness).} \textit{%
If the model family is expressive enough to allow for joint realizability—i.e., there exist parameters $(\theta^\star, \phi^\star)$ such that $\operatorname{KL}(Q_{\phi^\star}(x,\sigma) \| P_{\theta^\star, \phi^\star}(x,\sigma))=0$—then the marginal mismatch $\operatorname{KL}(\pi\|P_{\theta, \phi}^{\mathrm{seq}})$ is zero, and the self-aware lower bound of Theorem~\ref{thm:elbo} is tight, satisfying $\mathcal{L}_{\text{SAS}}(\theta^\star, \phi^\star) = H(\pi)$.
}
\vspace{5pt}
\begin{proof}[Proof]
Assume $\operatorname{KL}(Q_{\phi^\star}\|P_{\theta^\star,\phi^\star})=0$. Then $Q_{\phi^\star}=P_{\theta^\star,\phi^\star}$
almost everywhere, i.e., for all $(x,\sigma)$ such that $Q_{\phi^\star}(x,\sigma)>0$,
\[
\pi(x)\,v_{\phi^\star}(\sigma\mid x)=v_{\phi^\star}(\sigma\mid x)\,p_{\theta^\star}(x\mid\sigma).
\]
Therefore, for any $x$ with $\pi(x)>0$ and any $\sigma$ with $v_{\phi^\star}(\sigma\mid x)>0$,
\begin{equation}
\label{eq:path_eq_pi_on_support}
p_{\theta^\star}(x\mid\sigma)=\pi(x).
\end{equation}

\paragraph{(i) Zero marginal mismatch.}
By Theorem~\ref{thm:sas_joint_bound},
\[
\operatorname{KL}(\pi\|P_{\theta^\star,\phi^\star}^{\mathrm{seq}})
\le \operatorname{KL}(Q_{\phi^\star}\|P_{\theta^\star,\phi^\star})=0,
\]
hence $\operatorname{KL}(\pi\|P_{\theta^\star,\phi^\star}^{\mathrm{seq}})=0$, i.e., $P_{\theta^\star,\phi^\star}^{\mathrm{seq}}=\pi$.

\paragraph{(ii) Tightness of Theorem~\ref{thm:elbo}.}
Fix any $x$ with $\pi(x)>0$. From Eq.~\eqref{eq:path_eq_pi_on_support}, we know $p_{\theta^\star}(x\mid\sigma)=\pi(x)$ for all orders sampled from $v_{\phi^\star}$.
Because this term is constant with respect to $\sigma$, the expectation over $\sigma$ simply returns the value itself:
\[
\mathbb{E}_{\sigma\sim v_{\phi^\star}(\cdot\mid x)}[\log p_{\theta^\star}(x\mid\sigma)]
\;=\;
\mathbb{E}_{\sigma\sim v_{\phi^\star}(\cdot\mid x)}[\log \pi(x)]
\;=\;
\log \pi(x).
\]
Since $P_{\theta^\star,\phi^\star}^{\mathrm{seq}}(x) = \pi(x)$ (from step i), the LHS is also $\log \pi(x)$, proving the bound is tight.

\paragraph{(iii) Value of $\mathcal{L}_{\mathrm{SAS}}$.}
By Theorem~\ref{thm:sas_joint_bound} and $\operatorname{KL}(Q_{\phi^\star}\|P_{\theta^\star,\phi^\star})=0$,
\[
\mathcal{L}_{\mathrm{SAS}}(\theta^\star,\phi^\star)-H(\pi)=0
\quad\Rightarrow\quad
\mathcal{L}_{\mathrm{SAS}}(\theta^\star,\phi^\star)=H(\pi).
\]
\end{proof}
Corollary~\ref{cor:expressivity_tight_bound} states that under joint realizability this lower bound is \emph{achievable}, yielding $\operatorname{KL}(\pi\|P_{\theta, \phi}^{\mathrm{seq}})=0$. This provides a principled justification for our training strategy: $\mathcal{L}_{\mathrm{SAS}}$ is not only a dense, model-consistent reward for learning the unmasking order, but fundamentally connects order optimization to distribution matching. Crucially, this implies that perfect distributional alignment does not require the diffusion model to match the data distribution across \emph{all} possible unmasking orders; rather, it suffices to identify a single optimal ordering policy along which the model's predictions align with the ground truth.
\subsection{Proof of Remark~\ref{rem:expressivity_exactness}}
\label{sec:perfect_model}
Remark~\ref{rem:expressivity_exactness} implies that a perfect model is order-invariant, i.e. if the model class contains a parameter $\theta^\star$ whose conditionals match the true conditionals for every teacher-forced state (i.e., $p_{\theta^\star}^i(x_i\mid \tilde{x}^{(t)})=\pi(x_i\mid \tilde{x}^{(t)})$ for all valid masks).
In this regime, the upper bound in Theorem~\ref{thm:sas_joint_bound} is tight with $\mathcal{L}_{\mathrm{SAS}}(\theta^\star,\phi)=H(\pi)$ for \emph{any} order policy $\phi$.
\begin{proof}[Proof]
Fix any target sequence $x\in\mathcal{X}^n$ and any order $\sigma=(i_1,\ldots,i_n)\in S_n$.
Let $\tilde{x}^{(t-1)}(\sigma)$ be the teacher-forced state that reveals $\{i_1,\ldots,i_{t-1}\}$ from $x$.
By the chain rule of $\pi$ along the order $\sigma$,
\begin{equation}
\label{eq:pi_chain_along_sigma}
\pi(x)=\prod_{t=1}^n \pi\big(x_{i_t}\mid \tilde{x}^{(t-1)}(\sigma)\big).
\end{equation}
By the assumed conditional correctness of $\theta^\star$, for every teacher-forced state and every
currently masked position $i_t$, we have
\[
p_{\theta^\star}^{i_t}\!\big(x_{i_t}\mid \tilde{x}^{(t-1)}(\sigma)\big)
=\pi\big(x_{i_t}\mid \tilde{x}^{(t-1)}(\sigma)\big).
\]
Multiplying over $t$ and using the definition of the path likelihood,
\begin{align*}
p_{\theta^\star}(x\mid\sigma)
&=\prod_{t=1}^n p_{\theta^\star}^{i_t}\!\big(x_{i_t}\mid \tilde{x}^{(t-1)}(\sigma)\big)\\
&=\prod_{t=1}^n \pi\big(x_{i_t}\mid \tilde{x}^{(t-1)}(\sigma)\big)
=\pi(x),
\end{align*}
where the last equality follows from \eqref{eq:pi_chain_along_sigma}. Hence for any order policy $v_\phi$,
\[
P_{\theta^\star,\phi}^{\mathrm{seq}}(x)
=\sum_{\sigma\in S_n} v_\phi(\sigma\mid x)\,p_{\theta^\star}(x\mid\sigma)
=\sum_{\sigma\in S_n} v_\phi(\sigma\mid x)\,\pi(x)
=\pi(x),
\]
so $P_{\theta^\star,\phi}^{\mathrm{seq}}=\pi$ and therefore $\operatorname{KL}(\pi\|P_{\theta^\star,\phi}^{\mathrm{seq}})=0$.

\paragraph{Perfect denoiser renders order irrelevant.}
Since $p_{\theta^\star}(x\mid\sigma)=\pi(x)$ for all $\sigma$, we also have
$P_{\theta^\star,\phi}(x,\sigma)=v_\phi(\sigma\mid x)\pi(x)=Q_\phi(x,\sigma)$ for any $\phi$.
Thus $\operatorname{KL}(Q_\phi\|P_{\theta^\star,\phi})=0$ and by Theorem~\ref{thm:sas_joint_bound},
$\mathcal{L}_{\mathrm{SAS}}(\theta^\star,\phi)=H(\pi)$ for any order policy $\phi$.
\end{proof}
However, since the pretrained model $p_\theta$ is inevitably imperfect, it justifies our motivation for self-aware learning: improving either $\phi$ (finding easier generation paths) or $\theta$ (improving predictions) reduces the joint mismatch, as we observe empirically in Section~\ref{sec:second_stage}.

\subsection{Parallel Decoding for Unique Solution/Dirac Measure Tasks}
\label{sec:parallel_dirac}

A key advantage of diffusion language models is their ability to decode multiple tokens in parallel.
While our main theory focuses on sequential unmasking, the same self-aware objective also admits a
simple interpretation for parallel decoding in unique-solution tasks, such as Sudoku. In these tasks,
conditioned on the prompt or constraint $c$, the target distribution is a Dirac measure:
\begin{equation}
\pi(\cdot \mid c)=\delta_{x^\star(c)}.
\end{equation}
Let $\beta=(B_1,\dots,B_K)$ denote a block order, where each $B_k$ is a set of positions decoded in
parallel at step $k$. For a fixed block order $\beta$, define the teacher-forced parallel path
likelihood as
\begin{equation}
p_\theta^{\mathrm{par}}(x^\star \mid \beta,c)
=
\prod_{k=1}^K \prod_{i\in B_k}
p_\theta^i\!\left(x_i^\star \mid \tilde x^{(k-1)}(\beta;c),c\right),
\end{equation}
where $\tilde x^{(k-1)}(\beta;c)$ reveals the ground-truth tokens in the earlier blocks
$B_{<k}$ and masks the remaining positions.

For unique-solution tasks, each block conditional target is also a Dirac measure:
\begin{equation}
\pi\!\left(X_{B_k}\mid X_{B_{<k}}=x^\star_{B_{<k}},c\right)
=
\delta_{x^\star_{B_k}}
=
\prod_{i\in B_k}\delta_{x_i^\star}.
\end{equation}
Since the output space is discrete and each model conditional distribution is parameterized by a
softmax, every ground-truth token has positive probability:
\begin{equation}
p_\theta^i\!\left(x_i^\star \mid \tilde{x}^{(k-1)}(\beta;c),c\right) > 0,
\qquad \forall i,k.
\end{equation}
Hence the teacher-forced parallel path likelihood is well-defined and strictly positive:
\begin{equation}
p_\theta^{\mathrm{par}}(x^\star\mid\beta,c) > 0.
\end{equation}

Therefore, for any fixed block order $\beta$, the KL divergence from the Dirac target to the
parallel teacher-forced model path reduces to the negative log-likelihood of the unique solution:
\begin{equation}
\begin{aligned}
\mathrm{KL}\!\left(
\delta_{x^\star}
\,\middle\|\,
p_\theta^{\mathrm{par}}(\cdot\mid \beta,c)
\right)
&=
-\log p_\theta^{\mathrm{par}}(x^\star\mid \beta,c) \\
&=
-\sum_{k=1}^K \sum_{i\in B_k}
\log p_\theta^i\!\left(
x_i^\star \mid \tilde x^{(k-1)}(\beta;c),c
\right).
\end{aligned}
\end{equation}
Equivalently, defining the parallel self-aware reward as
\begin{equation}
R_\theta^{\mathrm{par}}(x^\star,\beta)
:=
\sum_{k=1}^K \sum_{i\in B_k}
\log p_\theta^i\!\left(
x_i^\star \mid \tilde x^{(k-1)}(\beta;c),c
\right),
\end{equation}
we obtain
\begin{equation}
\mathrm{KL}\!\left(
\delta_{x^\star}
\,\middle\|\,
p_\theta^{\mathrm{par}}(\cdot\mid \beta,c)
\right)
=
-\,R_\theta^{\mathrm{par}}(x^\star,\beta).
\end{equation}

Thus, for unique-solution tasks, maximizing the parallel teacher-forced self-aware reward is exactly
equivalent to minimizing the parallel path KL. This provides a direct justification for applying SAS
to blockwise or parallel decoding in settings where the target solution is deterministic given the
condition $c$.

\section{Implementation Details}
\label{app:implementation}


This section details the concrete implementation of the decode order policy used in our experiments. We focus exclusively on architectural choices, data flow, masking logic, and training/inference mechanics that are not explicit from the mathematical formulation in the main paper.

\subsection{Design Goals and Constraints}
\label{sec:design_goal}
The decode order policy is designed to satisfy the following practical constraints:

\begin{itemize}
    \item \textbf{Backbone-agnostic}: the policy operates on model outputs and auxiliary statistics without modifying or backpropagating through the diffusion language model.
    \item \textbf{Lightweight}: the policy introduces minimal additional parameters and computational overhead relative to the frozen diffusion backbone.
    \item \textbf{Position-selective}: at each diffusion step, the policy selects exactly one token position to reveal, subject to validity constraints.
    \item \textbf{Mask-aware}: the policy must never select padding positions, prompt tokens, or already revealed tokens.
\end{itemize}

These constraints motivate a per-position scoring formulation with explicit candidate masking.

\subsection{Policy Inputs}
\label{sec:policy_input}
At each diffusion step, the policy operates on a partially revealed sequence of length $L$. For each position $j \in \{1,\dots,L\}$, we construct a compact feature vector summarizing the diffusion model's current belief about that position.

Specifically, the policy input tensor has shape $[B, L, 3]$, where $B$ is the batch size, and each feature vector consists of:
\begin{enumerate}
    \item \textbf{Maximum token probability}: the top-1 probability under the diffusion model’s predicted distribution at position $j$.
    \item \textbf{Token entropy}: the entropy of the predicted token distribution, capturing uncertainty.
    \item \textbf{Normalized position index}: $j / L$, providing weak positional context.
    \item \textbf{Current State} Current state $s_t$ with mask on future prediction positions.
\end{enumerate}

These features are computed on-the-fly during diffusion rollouts and are detached from the backbone model graph.

\subsection{Policy Architecture}
\label{sec:policy_arch}
The decode order policy is implemented as a small Transformer encoder that maps per-position features to scalar reveal scores.

\paragraph{Encoder.}
We use a Transformer encoder with (i) batch-first layout, (ii) $n_{\text{layer}}$ self-attention layers, (iii) model dimension $d_{\text{model}}$ and (iv) $n_{\text{head}}$ attention heads.

Self-attention allows the policy to reason about relative confidence and uncertainty across positions rather than scoring positions independently.

\paragraph{Output Head.}
The encoder output at each position is projected through a linear layer to produce a scalar $\lambda_j$, yielding a score tensor $\lambda \in \mathbb{R}^{B \times L}$. No softmax is applied at this stage; normalization is deferred until candidate masking is applied.

\subsection{Candidate Masking and Valid Actions}
\label{sec:candidate_mask}
Not all positions are valid candidates for selection at every step. We construct a binary candidate mask $M \in \{0,1\}^{B \times L}$ indicating valid reveal positions.

A position is considered \textbf{invalid} if it satisfies any of the following:
\begin{itemize}[noitemsep, topsep=0pt]
    \item it corresponds to a padding token,
    \item it lies in the prompt region,
    \item it has already been revealed in a previous diffusion step.
\end{itemize}

Invalid positions are assigned $-\infty$ before normalization to ensure they receive zero probability mass. Only positions with $M_{j}=1$ participate in action selection.

\subsection{Action Selection}
\label{sec:action}
At each diffusion step, the policy selects exactly one position per sequence. Given masked scores $\tilde{\lambda}$, we form a categorical distribution over valid positions and sample (or greedily select) an index: $j^* = \arg\max_j \tilde{\lambda}_j.$

The selected position is then revealed in the diffusion canvas, and the process repeats until all answer tokens are unmasked.

\subsection{Training-Time Supervision}
\label{sec:train_time}
During training, we supervise the policy using ground-truth reveal orders extracted from reference diffusion trajectories.

For each diffusion step:
\begin{itemize}[noitemsep, topsep=0pt]
    \item the candidate mask is constructed,
    \item logits are normalized only over valid positions,
    \item GRPO loss is computed against the target reveal position.
\end{itemize}

Loss is accumulated across steps and averaged over the batch. Padding positions and invalid candidates are fully excluded from loss computation.

\subsection{Inference-Time Integration}
\label{sec:inference}
At inference time, the policy replaces heuristic scheduling strategies. The diffusion model remains unchanged; only the reveal order differs.

The policy is queried once per diffusion step, introducing negligible overhead compared to the backbone forward pass. All experiments use greedy selection for stability.



\paragraph{Implementation Notes.}
All components are implemented in PyTorch. Attention masks and candidate masks are represented as boolean tensors and applied consistently in both training and inference. The policy network is trained independently and can be attached to any compatible diffusion language model without retraining the backbone.

\section{Training and Evaluation Results}
\label{sec:eval_result}
In this section, we provide further details on the evaluation of our unmasking order policy across the Sudoku, GSM8K, and MBPP datasets.
\subsection{Sudoku}
\label{sec:sudoku_app}
\begin{figure}[htbp]
    \centering
    \begin{subfigure}[b]{0.32\textwidth}
        \centering
        \includegraphics[width=\linewidth]{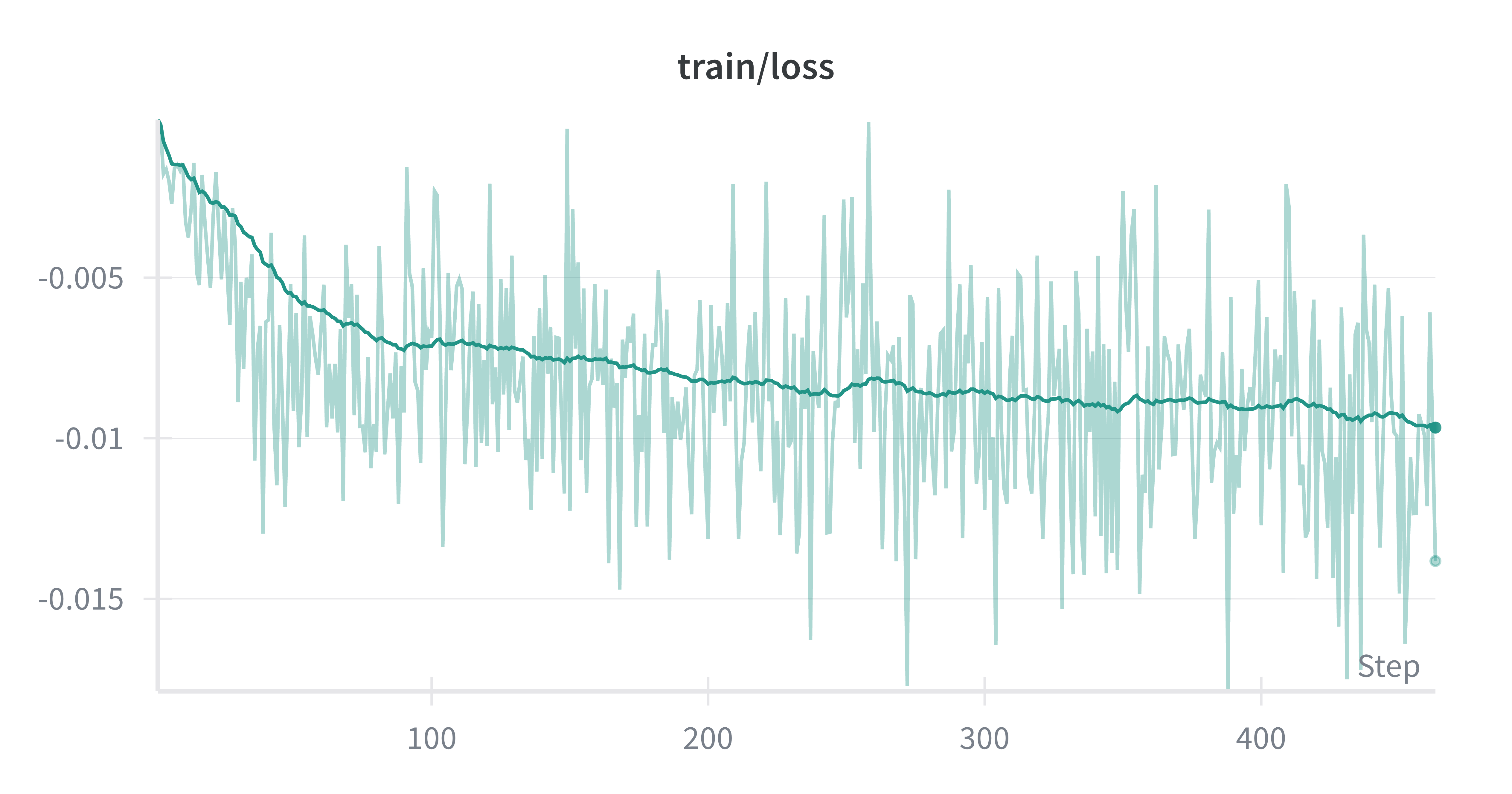}
        \caption{Training Loss}
        \label{fig:train_loss}
    \end{subfigure}
    \hfill 
    \begin{subfigure}[b]{0.32\textwidth}
        \centering
        \includegraphics[width=\linewidth]{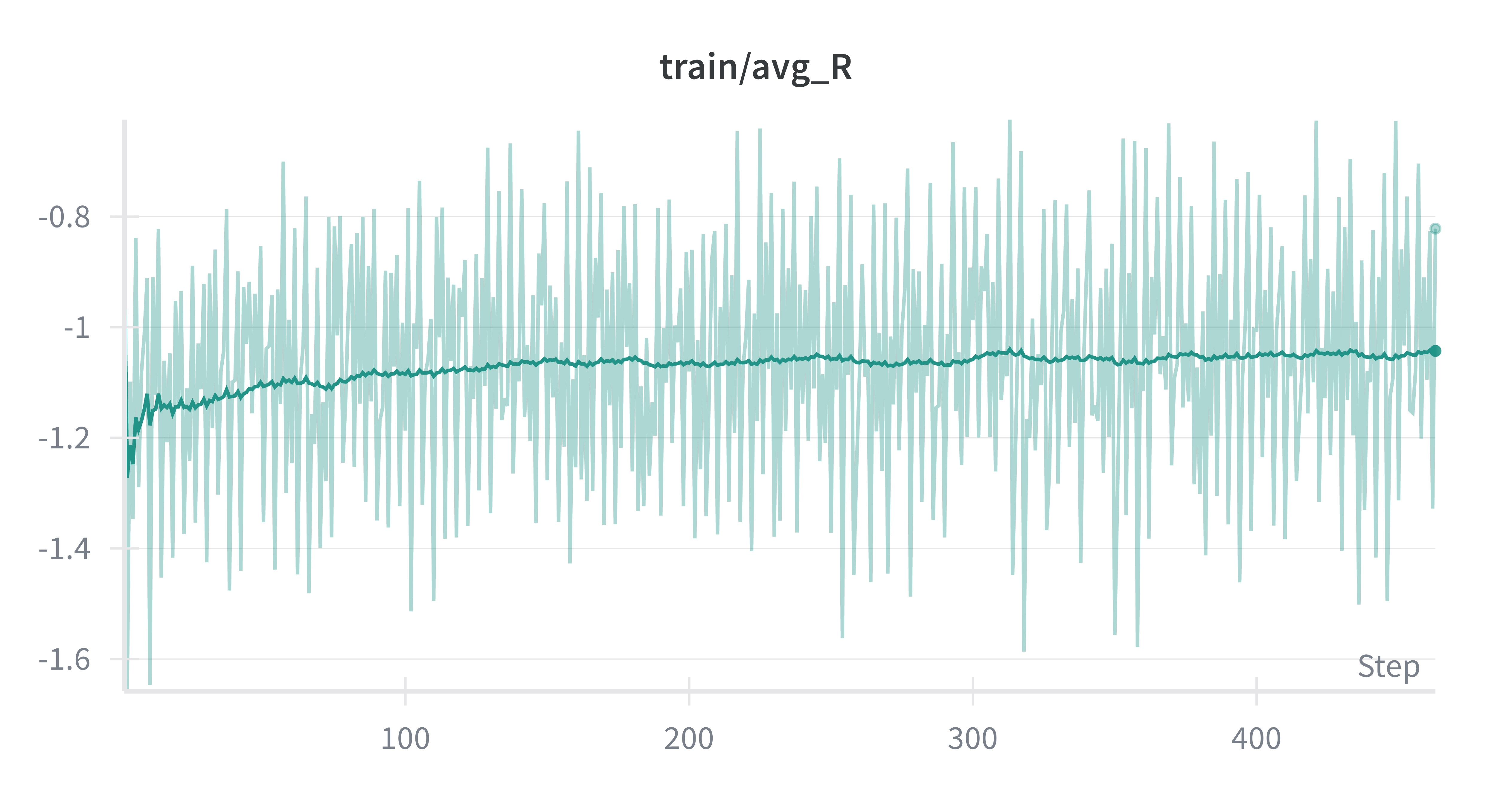}
        \caption{Avg Reward}
        \label{fig:avg_reward}
    \end{subfigure}
    \hfill 
    \begin{subfigure}[b]{0.32\textwidth}
        \centering
        \includegraphics[width=\linewidth]{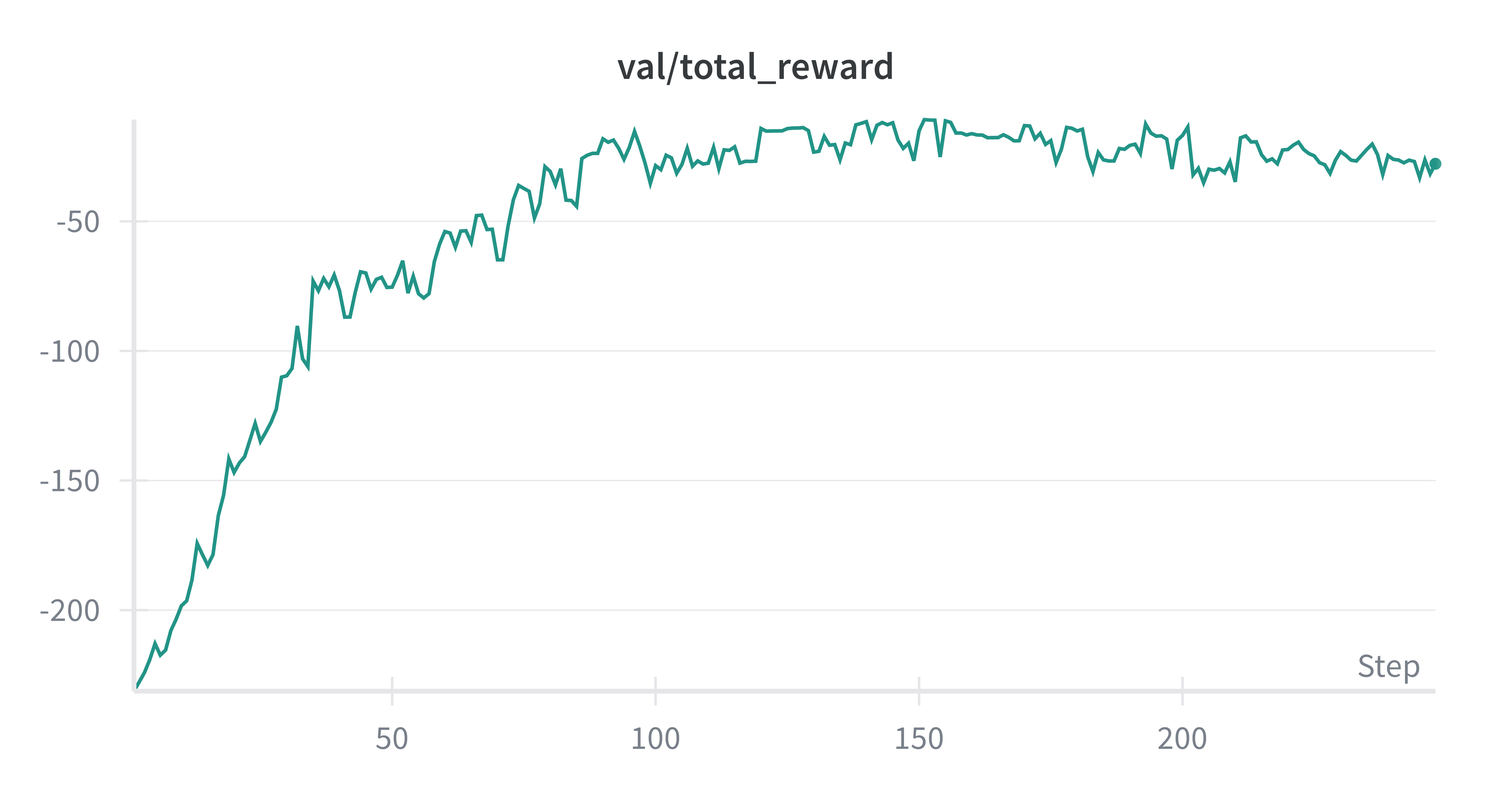}
        \caption{Val Total Reward}
        \label{fig:val_reward}
    \end{subfigure}
    
    \caption{Training and Validation Metrics. (a) Training loss decreases over time. (b) Average training reward stabilizes. (c) Validation total reward converges.}
    \label{fig:sudoku_train}
\end{figure}
\paragraph{Training dynamics} We show the training and evaluation dynamics in Figure~\ref{fig:sudoku_train}. It illustrates the training and validation dynamics of the proposed order policy optimization. Panel (a) depicts the training loss, which exhibits a consistent downward trend over the course of training, indicating stable optimization of the objective function. Panel (b) tracks the average training reward, showing a rapid initial increase followed by stabilization, which reflects the policy successfully learning to prioritize high-likelihood reveal orders. Finally, Panel (c) demonstrates that the total reward on the validation set follows a similar trajectory, converging to a high value; this plateauing confirms that the learned policy generalizes robustly to unseen data without significant overfitting.

\paragraph{Equivalence-class Kendall's $\tau$} We also conduct a systematic analysis of the order under different schedulings and compare them with human EXPERT order using equivalence class Kendall's $\tau$. In this paragraph, we show details on equivalence-class Kendall's $\tau$ used for order analysis in Sudoku regime (Section~\ref{sec:exp_sudoku}). Standard rank correlation metrics penalize any deviation from the reference order. However, in Sudoku, many moves are strategically equivalent. For instance, a \emph{Naked Single} (a cell with only one valid candidate) and a \emph{Hidden Single} (a cell is the only position in a row/col/box for a specific number) are both deterministic logical steps. If a board state offers multiple Naked Singles simultaneously, they can be filled in any order without changing the underlying strategy. We propose a variant of Kendall's $\tau$ that ignores permutations among such equivalent moves.
Given a human expert order $\pi$ and a model order $\sigma$, we iterate through the expert's trajectory. Let $s_t$ be the board state after $t-1$ expert moves. For the current expert move $\pi_t$ and any future move $\pi_j$ (where $t<j$), we check if $\pi_j$ could have been solved at step $t$ using the \emph{same deduction logic} as $\pi_t$ (i.e., $c(\pi_j; s_t) = c(\pi_t; s_t)$). If so, the choice between $\pi_t$ and $\pi_j$ is arbitrary; we treat this pair as unordered and exclude it from the evaluation.

We calculate Kendall's $\tau$ solely on the remaining \emph{strategy-relevant} pairs—those where the expert prioritized a simpler or more fundamental deduction class over a complex one (e.g., prioritizing a \emph{Naked Single} over a complex \emph{Intersection Removal}). Let $\mathcal{P}_{\text{rel}}$ be the set of relevant pairs. We compute the number of concordant pairs $C^*$ (model agrees with expert hierarchy) and discordant pairs $D^*$ (model flips hierarchy) within $\mathcal{P}_{\text{rel}}$. The metric is defined as:
\[
\tau_{\mathrm{eq}} = \frac{C^* - D^*}{C^* + D^*}.
\]
A value of $\tau_{\mathrm{eq}}=1$ implies the model perfectly matches the expert's strategic hierarchy, even if the exact sequence differs on interchangeable steps.

\paragraph{When does the model fail on Sudoku?}
We report an empirical observation on \emph{when} sequential masked-diffusion decoding tends to make its
first mistake on Sudoku. We first fine-tune the base diffusion model on the Sudoku corpus using the
standard masked-diffusion pretraining objective (Section~\ref{sec:mdm}) so that it acquires basic
Sudoku-solving capability. We then evaluate the resulting model by sequentially unmasking cells
under a given decoding schedule (e.g., our learned order policy).

To quantify failure timing, we define the \textbf{first-step failure} as the earliest decoding step
$t$ at which the generated value for the selected cell differs from the ground-truth value at that
position (i.e., the first wrong fill along the rollout). Figure~\ref{fig:failure_distributions} plots the
distribution of this first failure step on the evaluation set for random, confidence, margin, and learned order policy.
Interestingly, for heuristic schedules, failures concentrate near the \emph{end} of decoding (roughly the last few steps),
rather than at the beginning. In contrast, we observe that a random schedule tends to fail much
earlier, indicating that scheduling substantially affects not only the final accuracy but also the
\emph{error onset} during generation. Even with a trained order policy, however, the diffusion model
still exhibits a nontrivial mass of late-stage failures, suggesting that the remaining unsolved cells
are intrinsically harder and motivating further improvements to the diffusion head (e.g., our
second-stage fine-tuning in Section~\ref{sec:second_stage}).
\begin{figure}[htbp]
    \centering
    \begin{subfigure}[b]{0.48\textwidth}
        \centering
        \includegraphics[width=\linewidth]{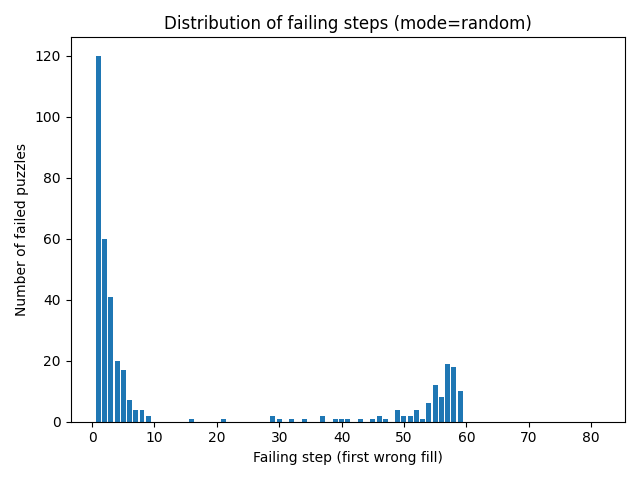}
        \caption{Random Order}
        \label{fig:fail_random}
    \end{subfigure}
    \hfill 
    \begin{subfigure}[b]{0.48\textwidth}
        \centering
        \includegraphics[width=\linewidth]{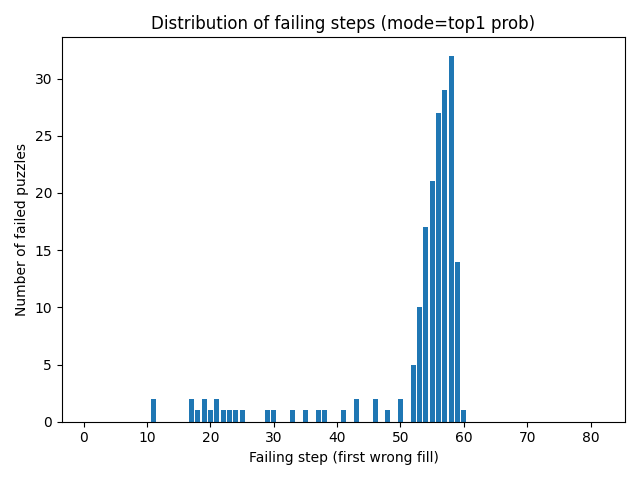}
        \caption{Confidence-based order}
        \label{fig:fail_confidence}
    \end{subfigure}
    
    \vspace{0.5cm} 
    
    \begin{subfigure}[b]{0.48\textwidth}
        \centering
        \includegraphics[width=\linewidth]{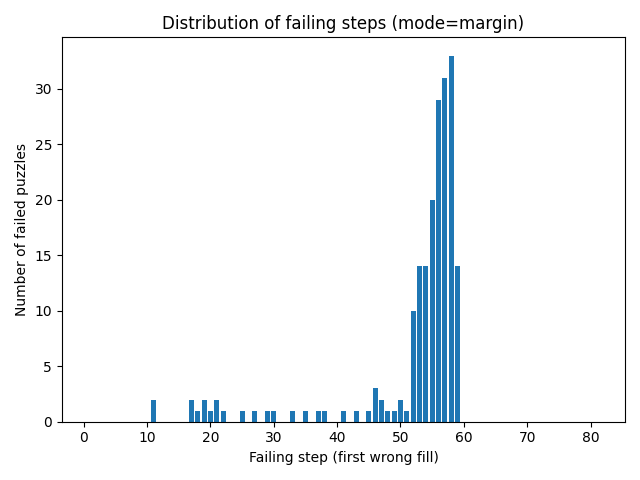}
        \caption{Margin-based Order}
        \label{fig:fail_margin}
    \end{subfigure}
    \hfill
    \begin{subfigure}[b]{0.48\textwidth}
        \centering
        \includegraphics[width=\linewidth]{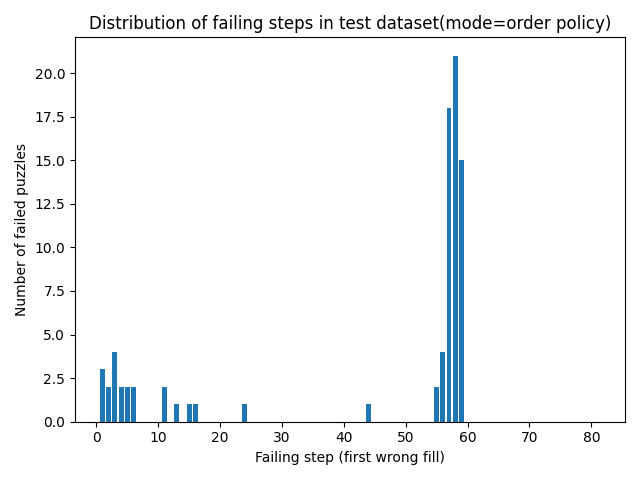}
        \caption{Policy-based Order}
        \label{fig:fail_policy}
    \end{subfigure}
    
    \caption{Distribution of failing steps across different ordering strategies. (a) Random ordering fails early. (b) Confidence strategies show late-failure patterns. (c) Margin and (d) Learned policy shifts failures later, but failure still concentrates on the last 5 steps.}
    \label{fig:failure_distributions}
\end{figure}

\subsection{Math and code reasoning with LLaDA-8B}
\label{sec:llada}
\begin{table*}[t]
\caption{Generalization across generation length and semi-autoregressive decoding.
We report pass@1 accuracy with error bars on \textsc{GSM8K} (5-shot) and \textsc{MBPP} (3-shot).
Notation: \texttt{L--B} denotes total generation length $L$ and block size $B$; \texttt{L--L} is full diffusion decoding. We vary the number of decoding blocks $K \in \{1, 4, 8\}$ across different generation lengths.}
\label{tab:math_code_gen}
\vskip 0.15in
\begin{center}
\begin{scriptsize}
\setlength{\tabcolsep}{2.2pt}
\newcommand{\acc}[2]{$#1{\scriptstyle \pm #2}$}
\newcommand{\bacc}[2]{$\mathbf{#1{\scriptstyle \pm #2}}$}
\resizebox{\textwidth}{!}{%
\begin{tabular}{l|ccc|ccc|ccc}
\toprule
& \multicolumn{9}{c}{\textbf{GSM8K}} \\
\cmidrule(lr){2-4} \cmidrule(lr){5-7} \cmidrule(lr){8-10}
\textbf{Order} &
\textbf{128--128} & \textbf{128--32} & \textbf{128--16} &
\textbf{256--256} & \textbf{256--64} & \textbf{256--32} &
\textbf{512--512} & \textbf{512--128} & \textbf{512--64} \\
\midrule
Confidence (Top-1) &
\acc{0.620}{0.018} & \acc{0.728}{0.017} & \acc{0.744}{0.017} &
\acc{0.628}{0.018} & \acc{0.760}{0.016} & \acc{0.758}{0.017} &
\acc{0.765}{0.018} & \acc{0.810}{0.017} & \acc{0.810}{0.016} \\
Margin &
\acc{0.608}{0.019} & \acc{0.742}{0.017} & \acc{0.740}{0.018} &
\acc{0.640}{0.018} & \acc{0.794}{0.016} & \acc{0.796}{0.017} &
\acc{0.628}{0.020} & \acc{0.800}{0.017} & \acc{0.815}{0.016} \\
Entropy &
\acc{0.600}{0.020} & \acc{0.740}{0.018} & \acc{0.700}{0.019} &
\acc{0.620}{0.019} & \acc{0.790}{0.017} & \acc{0.790}{0.017} &
\acc{0.792}{0.016} & \acc{0.798}{0.017} & \acc{0.802}{0.016} \\
\textbf{Order Policy (Ours)} &
\bacc{0.660}{0.017} & \bacc{0.788}{0.016} & \bacc{0.756}{0.017} &
\bacc{0.760}{0.016} & \bacc{0.810}{0.016} & \bacc{0.800}{0.016} &
\bacc{0.793}{0.017} & \bacc{0.825}{0.016} & \bacc{0.827}{0.016} \\
\midrule
& \multicolumn{9}{c}{\textbf{MBPP}} \\
\cmidrule(lr){2-4} \cmidrule(lr){5-7} \cmidrule(lr){8-10}
\textbf{Order} &
\textbf{128--128} & \textbf{128--32} & \textbf{128--16} &
\textbf{256--256} & \textbf{256--64} & \textbf{256--32} &
\textbf{512--512} & \textbf{512--128} & \textbf{512--64} \\
\midrule
Confidence (Top-1) &
\acc{0.373}{0.011} & \acc{0.400}{0.010} & \acc{0.388}{0.010} &
\acc{0.395}{0.011} & \acc{0.396}{0.010} & \acc{0.387}{0.010} &
\acc{0.385}{0.011} & \acc{0.392}{0.010} & \acc{0.388}{0.010} \\
Margin &
\acc{0.333}{0.012} & \acc{0.390}{0.010} & \acc{0.388}{0.010} &
\acc{0.363}{0.011} & \acc{0.380}{0.010} & \acc{0.378}{0.010} &
\acc{0.385}{0.011} & \acc{0.388}{0.010} & \acc{0.380}{0.010} \\
Entropy &
\acc{0.323}{0.012} & \acc{0.388}{0.010} & \acc{0.396}{0.010} &
\acc{0.388}{0.011} & \acc{0.385}{0.010} & \acc{0.378}{0.010} &
\acc{0.390}{0.011} & \acc{0.390}{0.010} & \acc{0.393}{0.010} \\
\textbf{Order Policy (Ours)} &
\bacc{0.375}{0.010} & \bacc{0.405}{0.009} & \bacc{0.416}{0.009} &
\bacc{0.410}{0.010} & \bacc{0.407}{0.009} & \bacc{0.400}{0.009} &
\bacc{0.407}{0.010} & \bacc{0.395}{0.009} & \bacc{0.395}{0.009} \\
\bottomrule
\end{tabular}
}
\end{scriptsize}
\end{center}
\vskip -0.1in
\end{table*}

\paragraph{Training Details $\&$ Computational Cost}  We trained the order policy using the Self-Aware Scheduling (SAS) framework with a group size of $G=6$ and a batch size of $B=3$ (total effective batch size of $18$ trajectories per optimization step). The maximum sequence length was set to $L=512$, with a learning rate of $1\times 10^{-4}$ on 4 NVIDIA H200 GPUs. Based on the training logs, the wall-clock time per step—including group sampling, reward computation, and backpropagation—averaged approximately 3.2 minutes. The policy typically converged within 200 steps (approx. 12 GPU-hours). The computational cost of SAS training is dominated by the rollout phase, where the policy samples $G$ distinct ordering trajectories, and the reward phase, where the frozen diffusion model evaluates the pathwise log-likelihood of these trajectories via teacher-forced recomputation.  The frozen denoiser's likelihood evaluation (Reward) consumes the majority of FLOPs, significantly exceeding the policy's gradient update cost.

\paragraph{Evaluation}
We also report detailed evaluation result in Table~\ref{tab:math_code_gen}. We use \textsc{argmax} choice with order policy by setting the temperature 0 during evaluation. While heuristic baselines exhibit inconsistent performance across different settings, our learned order policy consistently demonstrates superior or competitive results across the entire spectrum of decoding regimes. This robustness confirms that our scheduling strategy captures fundamental reasoning structures, allowing it to transfer effectively well beyond its training configuration.
\begin{figure*}[t]
    \centering
    \includegraphics[width=\textwidth]{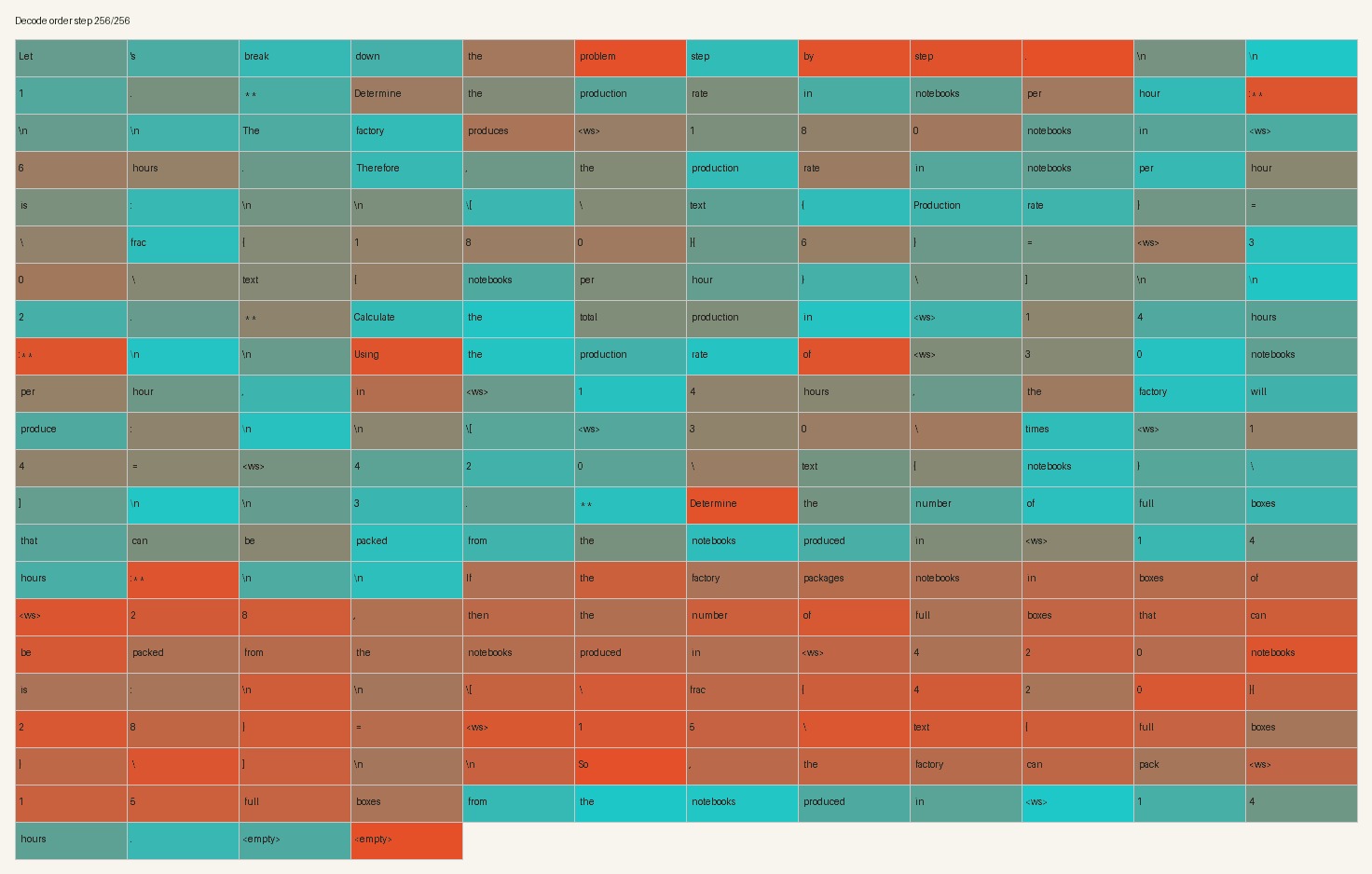}
    \caption{\textbf{Visualization of learned decoding order on a multi-step reasoning task.} Prompts = [“A factory produces notebooks over several days. First, determine a key parameter: on Monday, the factory produces notebooks at a constant rate and makes 180 notebooks in 6 hours. 1. Using the rate from second question, how many notebooks will the factory produce in 14 hours? 2. What is the production rate in notebooks per hour? 3. if the factory must package notebooks in boxes of 28, how many full boxes can be packed from the notebooks produced in those 14 hours?"]. The text is colored by generation step, transitioning from \textbf{blue (generated first)} to \textbf{red (generated last)}. Notably, the model prioritizes answering \textbf{Question 2} (calculating the rate) first, identifying it as a prerequisite before solving \textbf{Question 1}, which depends on that rate. This non-sequential ordering demonstrates the model's capacity for \textbf{global semantic understanding}, allowing it to plan and resolve logical dependencies effectively.}
    \label{fig:llada_gsm8k}
\end{figure*}

\paragraph{Inference cost.}
SAS introduces only a small inference-time overhead because the order policy is lightweight compared
to the frozen backbone denoiser. On GSM8K with generation length $128$, averaged over $15$ runs,
backbone-only decoding takes $338.753$s, while backbone decoding with SAS takes $344.415$s.
Thus, SAS adds only $5.662$s of decoding time, corresponding to a $1.67\%$ total overhead. This
suggests that the learned ordering mechanism improves decoding quality while preserving nearly the
same inference efficiency as the original backbone.

\begin{table}[t]
\centering
\small
\caption{
Inference-time overhead of SAS on GSM8K with generation length $128$, averaged over $15$ runs.
}
\label{tab:inference_cost}
\begin{tabular}{@{}lc@{}}
\toprule
Setting & Avg. decoding time \\
\midrule
Backbone only & $338.753$s \\
Backbone + SAS & $344.415$s \\
Added time from SAS & $+5.662$s \; ($1.67\%$) \\
\bottomrule
\end{tabular}
\end{table}

\paragraph{Direct generalization to parallel decoding.}
We also evaluate whether a sequentially trained SAS policy can be directly reused for parallel
top-$k$ decoding, where multiple positions are decoded at each step. Here the decoding setting is
denoted by $L$-$B$-$T$, where $L$ is generation length, $B$ is block size, and $T$ is the number of
decoding steps. Table~\ref{tab:parallel_decoding_generalization} reports the results on GSM8K and
MBPP.

\begin{table}[t]
\centering
\small
\caption{
Direct generalization of a sequentially trained SAS policy to parallel top-$k$ decoding.
Each setting is denoted by $L$-$B$-$T$, where $L$ is generation length, $B$ is block size,
and $T$ is the number of decoding steps.
}
\label{tab:parallel_decoding_generalization}
\begin{tabular}{@{}lcccc@{}}
\toprule
Task & $128$-$128$-$64$ & $128$-$64$-$32$ & $256$-$64$-$64$ & $256$-$64$-$32$ \\
\midrule
GSM8K & 0.58 & 0.60 & 0.61 & 0.58 \\
MBPP  & 0.2567 & 0.14 & 0.14 & 0.1667 \\
\bottomrule
\end{tabular}
\end{table}

These results suggest that a policy trained only for sequential decoding does not directly optimize
the parallel top-$k$ decoding regime. Compared with sequential decoding, direct parallel decoding
leads to performance drops of different magnitudes across tasks and block configurations. This is
expected because decoding multiple tokens simultaneously introduces additional dependencies that are
not present in the sequential training objective. We therefore view direct parallel policy learning as
an important extension of SAS: while sequentially trained policies can be applied to parallel decoding,
fully exploiting the parallel advantage of diffusion language models likely requires training the
order policy directly under blockwise or parallel decoding objectives.

\subsection{Open-ended Instruction-following Generation}
To evaluate whether learned scheduling also helps beyond structured reasoning tasks, we additionally train and
test SAS on the open-ended instruction-following benchmark IFEval \cite{zhou2023instructionfollowingevaluationlargelanguage}. We compare the learned order policy
against standard heuristic schedules under the same decoding setting, where $L$-$B$-$T$ denotes
generation length, block size, and number of decoding steps. Table~\ref{tab:ifeval_results} reports
results for the $256$-$64$-$256$ setting. The learned order policy outperforms all heuristic baselines,
suggesting that the benefit of learned scheduling is not limited to Sudoku, math, or code generation,
but also extends to open-ended instruction-following generation.

\begin{table}[t]
\centering
\small
\caption{
Open-ended instruction-following evaluation on IFEval. The decoding setting is $256$-$64$-$256$,
where $L$-$B$-$T$ denotes generation length, block size, and number of decoding steps.
}
\label{tab:ifeval_results}
\begin{tabular}{@{}lc@{}}
\toprule
Method & IFEval Score \\
\midrule
Confidence & $0.5067 \pm 0.0289$ \\
Margin & $0.5233 \pm 0.0289$ \\
Entropy & $0.5267 \pm 0.0288$ \\
\textbf{Order Policy (Ours)} & $\mathbf{0.5633 \pm 0.0288}$ \\
\bottomrule
\end{tabular}
\end{table}

\subsection{Comparison with other order learning methods}
\label{sec:order_learning}
We compare SAS with prior order-learning methods \cite{hong2025improving, jazbec2025learning} from two complementary perspectives.
A fully controlled reimplementation of all prior methods is difficult because some codebases and
training details are not publicly available, and because several prior methods rely on external
verifiable terminal rewards rather than the self-aware likelihood reward used in SAS. Therefore,
we avoid making claims of a definitive apples-to-apples benchmark. Instead, we provide: 
(i) a controlled reward ablation within our own framework, and 
(ii) matched-setting comparisons against reported numbers from prior work.

\paragraph{Controlled terminal-reward comparison within our framework.}
To isolate the key methodological difference, we replace our dense self-aware reward with a sparse
terminal reward similar to those used in prior verifier-based order-learning methods, while keeping
the same backbone denoiser, order-policy network, and training pipeline fixed. This comparison is
reported in Section~6.5 and Figure~3. Under this controlled setup, SAS performs better overall,
suggesting that the dense self-aware reward provides a more effective learning signal than sparse
terminal correctness feedback for learning the unmasking order.

\paragraph{Matched-setting comparison to reported results.}
We also compare SAS against reported values from Hong et al. and Jazbec et al. under matched decoding
settings when available. We denote each setting by $L$-$B$-$T$, where $L$ is the generation length,
$B$ is the block size, and $T$ is the number of decoding steps. The results are shown in
Table~\ref{tab:order_learning_llm_comparison}. SAS is competitive overall and stronger in most
matched reported settings, especially on GSM8K. On GSM8K, SAS improves over prior reported results
in two settings and matches the best reported value in the third. On MATH500, SAS outperforms
Jazbec et al. in the $256$-$256$-$256$ setting, while falling below the best reported value in the
other two settings.

\begin{table}[t]
\centering
\caption{
Matched-setting comparison with reported results from prior order-learning methods.
Each setting is denoted by $L$-$B$-$T$, where $L$ is generation length, $B$ is block size,
and $T$ is the number of decoding steps. Missing entries indicate that the corresponding setting
was not reported in the prior work.
}
\label{tab:order_learning_llm_comparison}
\begin{tabular}{lccc}
\toprule
\multicolumn{4}{c}{\textbf{GSM8K}} \\
\midrule
Method & $128$-$32$-$128$ & $256$-$32$-$256$ & $256$-$256$-$256$ \\
\midrule
Hong et al. & 0.703 & -- & -- \\
Jazbec et al. & -- & 0.800 & 0.330 \\
\textbf{SAS (ours)} & \textbf{0.7875} & \textbf{0.800} & \textbf{0.760} \\
\midrule
\multicolumn{4}{c}{\textbf{MATH500}} \\
\midrule
Method & $128$-$32$-$128$ & $256$-$32$-$256$ & $256$-$256$-$256$ \\
\midrule
Hong et al. & \textbf{0.284} & -- & -- \\
Jazbec et al. & -- & \textbf{0.355} & 0.200 \\
\textbf{SAS (ours)} & 0.260 & 0.300 & \textbf{0.2467} \\
\bottomrule
\end{tabular}
\end{table}

\paragraph{Sudoku comparison across problem scales.}
We further evaluate order learning on Sudoku, where prior work and SAS were originally tested on
different problem scales. To make the comparison more informative, we train the available \cite{hong2025improving}.
codebase on $9\times9$ Sudoku and our codebase on $4\times4$ Sudoku, so that both methods are
evaluated on both $4\times4$ and $9\times9$ settings. The results are shown in
Table~\ref{tab:sudoku_cross_scale_comparison}. SAS performs favorably on both scales.

\begin{table}[t]
\centering
\caption{
Cross-scale Sudoku comparison of SAS and \cite{hong2025improving}. Both methods are trained and then evaluated on $4\times4$ and $9\times9$ Sudoku.
}
\label{tab:sudoku_cross_scale_comparison}
\begin{tabular}{lcc}
\toprule
Method & $4\times4$ Sudoku & $9\times9$ Sudoku \\
\midrule
Hong et al. & 80\% & 76\% \\
\textbf{SAS (ours)} & \textbf{95\%} & \textbf{91\%} \\
\bottomrule
\end{tabular}
\end{table}

\paragraph{Interpretation.}
Overall, these comparisons suggest that SAS is competitive with existing order-learning methods and
often improves over them in matched reported settings. However, we emphasize that the comparison is
not perfectly controlled: methods may differ in backbone model, architecture, training pipeline,
reward design, and task setup. We therefore view the reported comparisons as suggestive evidence
rather than a definitive apples-to-apples benchmark. A fully controlled same-backbone comparison,
for example on GSM8K with LLaDA, would be a valuable direction for future work.

\section{Comparison with Hidden-State–Conditioned Decode Order Policy}
\label{app:hidden-order-policy}

In addition to the feature-based order policy described in the main paper, we implement and evaluate an alternative decode-order policy that directly conditions on the internal hidden states of the frozen LLaDA diffusion model. This variant is intended to test whether richer token-level representations can further improve reveal order selection in practice.

The feature-based policy operates on lightweight summary statistics (e.g., entropy, maximum probability), which are inexpensive and model-agnostic. In contrast, hidden states encode substantially richer contextual information, including long-range dependencies and latent reasoning structure. However, they are also higher-dimensional, more entangled with the backbone model, and potentially misaligned with the decode-order decision boundary.

\subsection{Policy Inputs}
\label{sec:policy_hidden}
At each diffusion step, we extract the last-layer hidden states from the frozen LLaDA model for all token positions. For a batch of size $B$ and sequence length $L$, this yields a tensor:
\[
H \in \mathbb{R}^{B \times L \times d_{\text{hid}}}.
\]

Hidden states corresponding to padding tokens are masked out and never used as candidates. No gradients are propagated into the diffusion model.

\subsection{Architecture}
\label{sec:policy_arc_hidden}
The hidden-state order policy is implemented as a lightweight projection network operating on $H$:
\begin{itemize}[leftmargin=*, noitemsep, topsep=0pt]
    \item A linear projection from $d_{\text{hid}}$ to $d_{\text{proj}}$
    \item A non-linear activation
    \item A final linear layer producing a scalar score per position
\end{itemize}

This produces per-position scores $\lambda \in \mathbb{R}^{B \times L}$, analogous to the feature-based policy. No additional self-attention layers are used, as contextualization is already encoded in the backbone hidden states.

\subsection{Candidate Masking and Action Selection}
\label{sec:candidate_hidden}
Candidate masking follows the same rules as in the feature-based policy:
\begin{itemize}[leftmargin=*, noitemsep, topsep=0pt]
    \item prompt tokens are excluded,
    \item already revealed tokens are excluded,
    \item padding positions are excluded.
\end{itemize}

At each step, the policy selects the highest-scoring valid position and reveals the corresponding token. All training uses the GRPO loss.

\subsection{Comparison with Feature-Based Policy}
\label{sec:comparison_hidden}
Table~\ref{tab:gsm8k-hidden-policy} summarizes performance on GSM8K under varying decode budgets.

\begin{table}[H]
\centering
\small
\caption{GSM8K accuracy under different decode budgets.}
\label{tab:gsm8k-hidden-policy}
\begin{tabular}{lccc}
\toprule
Method & 5-shot / 128--128 & 5-shot / 128--64 & 5-shot / 128--32 \\
\midrule
Top-1 Confidence & 0.620 & 0.711 & 0.728 \\
Margin & 0.608 & 0.752 & 0.742 \\
Entropy & 0.600 & 0.756 & 0.740 \\
Order Policy (features) & 0.660 & 0.748 & \textbf{0.788} \\
Left-to-Right & $0.754 \pm 0.019$ & $0.756 \pm 0.019$ & $0.756 \pm 0.019$ \\
Hidden-State Policy & $0.478 \pm 0.022$ & $0.700 \pm 0.021$ & $0.754 \pm 0.019$ \\
\bottomrule
\end{tabular}
\end{table}

\paragraph{Observations.}
Across settings, the hidden-state policy does not consistently outperform the simpler feature-based policy. While performance improves as the decode budget becomes more constrained, the hidden-state variant remains less robust than the feature-based order policy and often underperforms even heuristic schedules.

\paragraph{Discussion.}
These results suggest that directly exposing high-dimensional backbone representations to the order
policy can be counterproductive. Although hidden states contain rich semantic information, they also
entangle many factors---such as token identity, syntactic structure, and semantic content---that may
not be directly relevant to the decode-order decision. This can make the hidden-state policy harder to
optimize and less stable. In contrast, explicitly constructed confidence and uncertainty features, such
as top-token probability, margin, and entropy, provide a more direct, interpretable, and stable signal
for deciding which position to reveal next, while remaining lightweight and model-agnostic. We therefore
view the hidden-state variant as an exploratory design study rather than a negative result about all
representation-based policies. The feature-based policy is chosen not because the design space is
exhausted, but because it is the most stable, lightweight, and empirically reliable option we found.


\end{document}